\def\B{{\mathbb B}}
\def\R{{\mathbb R}}
\long\def\comment#1{}
\def\dr{\displaystyle \rm}
\newcommand{\bel}{\begin{eqnarray}\label}
\newcommand{\eel}{\end{eqnarray}}
\newcommand{\bes}{\begin{eqnarray*}}
	\newcommand{\ees}{\end{eqnarray*}}
\def\R{{\real}}
\newcommand{\la}{\langle}
\newcommand{\ra}{\rangle}
\newcommand{\cIs}{\cI_{\hat{s}}}
\newcommand{\Tr}{\operatorname{Tr}}
\newcommand{\abs}[1]{\left|#1\right|}
\def\##1\#{\begin{align}#1\end{align}}
\def\$#1\${\begin{align*}#1\end{align*}}
\let\hat\widehat
\newcommand{\xb}{\mathbf{x}}
\newcommand{\bw}{\bm{w}}
\newcommand{\Ib}{\mathbf{I}}
\newcommand{\Mb}{\mathbf{M}}
\newcommand{\Qb}{\mathbf{Q}}
\newcommand{\Ub}{\mathbf{U}}
\newcommand{\Xb}{\mathbf{X}}
\newcommand{\bX}{\bm{X}}
\newcommand{\cB}{\mathcal{B}}
\newcommand{\cF}{\mathcal{F}}
\newcommand{\cI}{\mathcal{I}}
\newcommand{\cN}{\mathcal{N}}
\newcommand{\cQ}{\mathcal{Q}}
\newcommand{\cX}{\mathcal{X}}
\newcommand{\EE}{\mathbb{E}}
\newcommand{\PP}{\mathbb{P}}
\newcommand{\RR}{\mathbb{R}}
\newcommand{\SSS}{\mathbb{S}}
\newcommand{\bbeta}{\bm{\beta}}
\newcommand{\btheta}{\bm{\theta}}
\newcommand{\bPi}{\bm{\Pi}}
\newcommand{\argmin}{\mathop{\mathrm{argmin}}}
\newcommand{\sign}{\mathop{\mathrm{sign}}}
\newcommand{\supp}{\mathop{\mathrm{supp}}}
\DeclareMathOperator*{\ind}{\mathds{1}}  
\newcommand*{\zero}{{\bm 0}}
\global\long\def\B{\bm{B}}
\global\long\def\R{\bm{R}}
\global\long\def\x{\bm{x}}
\global\long\def\e{\bm{e}}
\newcommand{\overbar}[1]{\mkern 1.5mu\overline{\mkern-1.5mu#1\mkern-1.5mu}\mkern 1.5mu}
\newcommand{\ud}{{\mathrm{d}}}
\newtheoremstyle{mytheoremstyle} 
    {\topsep}                    
    {\topsep}                    
    {\normalfont}                   
    {}                           
    {\bfseries}                   
    {.}                          
    {.5em}                       
    {}  
\theoremstyle{mytheoremstyle}
\newcommand{\BlackBox}{\rule{1.5ex}{1.5ex}}  
\def\QED{~\rule[-1pt]{5pt}{5pt}\par\medskip}
\newenvironment{proof}{\par\noindent{\bf Proof\ }}{\hfill\BlackBox\\[2mm]}
\newtheorem{theorem}{Theorem}
\newtheorem{lemma}[theorem]{Lemma}
\newtheorem{corollary}[theorem]{Corollary}
\newtheorem{definition}[theorem]{Definition}
\numberwithin{equation}{section}
\numberwithin{theorem}{section}
\begin{document}

\title{\Huge Optimal linear estimation under unknown nonlinear transform}

\author
{
Xinyang Yi\thanks{Department of Electrical and Computer Engineering, The University of Texas at Austin, Austin, TX 78712, USA; e-mail: {\tt
yixy@utexas.edu}.} 
\qquad Zhaoran Wang\thanks{Department of Operations Research and Financial Engineering, Princeton University, Princeton, NJ 08544, USA; e-mail: {\tt
zhaoran@princeton.edu}.} 
\qquad Constantine Caramanis\thanks{Department of Electrical and Computer Engineering, The University of Texas at Austin, Austin, TX 78712, USA; e-mail: {\tt
constantine@utexas.edu}.}
\qquad Han Liu\thanks{Department of Operations Research and Financial Engineering, Princeton University, Princeton, NJ 08544, USA; e-mail: {\tt
hanliu@princeton.edu}.}
}
\date{}

\maketitle

\begin{abstract}
\noindent
Linear regression studies the problem of estimating a model parameter $\bbeta^* \!\in\! \RR^p$, from $n$ observations $\{(y_i,\xb_i)\}_{i=1}^n$ from linear model $y_i = \langle \xb_i,\bbeta^* \rangle + \epsilon_i$. We consider a significant generalization in which the relationship between $\langle \mathbf{x}_i,\bbeta^* \rangle$ and $y_i$ is noisy, quantized to a single bit, potentially nonlinear, noninvertible, as well as unknown. This model is known as the single-index model in statistics, and, among other things, it represents a significant generalization of one-bit compressed sensing. We propose a novel spectral-based estimation procedure and show that we can recover $\bbeta^*$ in settings (i.e., classes of link function $f$) where previous algorithms fail. In general, our algorithm requires only very mild restrictions on the (unknown) functional relationship between $y_i$ and $\langle \mathbf{x}_i,\bbeta^* \rangle$. We also consider the high dimensional setting where $\bbeta^*$ is sparse ,and introduce a two-stage nonconvex framework that addresses estimation challenges in high dimensional regimes where $p \gg n$. For a broad class of link functions between $\langle \mathbf{x}_i,\bbeta^* \rangle$ and $y_i$, we establish minimax lower bounds that demonstrate the optimality of our estimators in both the classical and high dimensional regimes.
\end{abstract}

\newpage

\section{Introduction}
We consider a generalization of the one-bit quantized regression problem, where we seek to recover the regression coefficient  $\bbeta^* \!\in\! \RR^p$ from one-bit measurements. Specifically, suppose that $\bX$ is a random vector in $\RR^p$ and $Y$ is a binary random variable taking values in $\{-1,1 \}$. We assume the conditional distribution of $Y$ given $\bX$ takes the form
\#
\label{model}
\PP(Y = 1 | \bX = \x) = \frac{1}{2}f(\langle \x, \bbeta^* \rangle) + \frac{1}{2},
\#
where $f: \mathbb{R} \rightarrow [-1, 1]$ is called the \emph{link function}. We aim to estimate $\bbeta^*$ from $n$ i.i.d. observations $\{(y_i,\xb_i)\}_{i=1}^{n}$ of the pair $(Y,\bX)$. In particular, we assume the link function $f$ is unknown. Without any loss of generality, we take $\bbeta^*$ to be on the unit sphere $\SSS^{p-1}$ since its magnitude can always be incorporated into the link function $f$. 


The model in \eqref{model} is simple but general. Under specific choices of the link function $f$, \eqref{model} immediately leads to many practical models in machine learning and signal processing, including logistic regression and one-bit compressed sensing. In the settings where the link function is assumed to be known, a popular estimation procedure is to calculate an estimator that minimizes a certain loss function. However, for particular link functions, this approach involves minimizing a nonconvex objective function for which the global minimizer is in general intractable to obtain. Furthermore, it is difficult or even impossible to know the link function in practice, and a poor choice of link function may result in inaccurate parameter estimation and high prediction error. We take a more general approach, and in particular, target the setting where $f$ is unknown. We propose an algorithm that can estimate the parameter $\bbeta^*$ in the absence of prior knowledge on the link function $f$. As our results make precise, our algorithm succeeds as long as the function $f$ satisfies a single moment condition. As we demonstrate, this moment condition is only a mild restriction on $f$. In particular, our methods and theory are widely applicable even to the settings where $f$ is non-smooth, e.g., $f(z) = \sign(z)$, or noninvertible, e.g., $f(z) = \sin(z)$. 

In particular, as we show in Section \ref{sec:example_models}, our restrictions on $f$ are sufficiently flexible so that our results provide a unified framework that encompasses a broad range of problems, including logistic regression, one-bit compressed sensing, one-bit phase retrieval as well as their robust extensions. We use these important examples to illustrate our results, and discuss them at several points throughout the paper.

{\bf Main contributions.} The key conceptual contribution of this work is a novel use of the method of moments. Rather than considering moments of the covariate, $\bX$, and the response variable, $Y$, we look at moments of differences of covariates, and differences of response variables. Such a simple yet critical observation enables everything that follows. In particular, it leads to our spectral-based procedure, which provides an effective and general solution for the suite of problems mentioned above. In the low dimensional (or what we refer to as the classical) setting, our algorithm is simple: a spectral decomposition of the moment matrix mentioned above. In the high dimensional setting, when the number of samples, $n$, is far outnumbered by the dimensionality, $p$, important when $\bbeta^*$ is sparse, we use a two-stage nonconvex optimization algorithm to perform the high dimensional estimation.

We simultaneously establish the statistical and computational rates of convergence of the proposed spectral algorithm as well as its consequences for the aforementioned estimation problems. We consider both the low dimensional setting where the number of samples exceeds the dimension (we refer to this as the ``classical'' setting) and the high dimensional setting where the dimensionality may (greatly) exceed the number of samples. In both these settings, our proposed algorithm achieves the same statistical rate of convergence as that of linear regression applied on data generated by the linear model without quantization. Second, we provide minimax lower bounds for the statistical rate of convergence, and thereby establish the optimality of our procedure within a broad model class. In the low dimensional setting, our results obtain the optimal rate with the optimal sample complexity. In the high dimensional setting, our algorithm requires estimating a sparse eigenvector, and thus our sample complexity coincides with what is believed to be the best achievable via polynomial time methods (\cite{berthet2013lowerSparsePCA}); the error rate itself, however, is information-theoretically optimal. We discuss this further in Section \ref{ssec:highdim}.

\vskip4pt
 {\bf Related works.}
Our model in \eqref{model} is close to the \emph{single-index model} (SIM) in statistics. In the SIM, we assume that the response-covariate pair $(Y,\bX)$ is determined by 
\#\label{eq:w01}
Y = f(\langle \bX, \bbeta^* \rangle) + W
\#
with unknown link function $f$ and noise $W$. Our setting is a special case of this, as we restrict $Y$ to be a binary random variable. The single index model is a classical topic, and thus there is extensive literature -- too much to exhaustively review it. We therefore outline the pieces of work most relevant to our setting and our results. For estimating $\bbeta^*$ in \eqref{eq:w01}, a feasible approach is $M$-estimation \citep{hardle1993optimal,delecroix2000optimal, delecroix2006semiparametric}, in which the unknown link function $f$ is jointly estimated using nonparametric estimators. Although these $M$-estimators have been shown to be consistent, they are not computationally efficient since they involve solving a nonconvex optimization problem. Another approach to estimate $\bbeta^*$ is named the \emph{average derivative estimator} (ADE; 
\citet{stoker1986consistent}). Further improvements of ADE are considered in \citet{powell1989semiparametric} and \citet{hristache2001direct}. ADE and its related methods require that the link function $f$ is at least differentiable, and thus excludes important models such as one-bit compressed sensing with $f(z) = \sign(z)$. Beyond estimating $\bbeta^*$, \citet{kalai2009isotron} and \citet{kakade2011efficient} focus on iteratively estimating a function $f$ and vector $\bbeta$ that are good for prediction, and they attempt to control the generalization error. Their algorithms are based on isotonic regression, and are therefore only applicable when the link function is monotonic and satisfies Lipschitz constraints. The work discussed above focuses on the low dimensional setting where $p \ll n$.

Another related line of works is \emph{sufficient dimension reduction}, where the goal is to find a subspace $\Ub$ of the input space such that the response $Y$ only depends on the projection $\Ub^{\top}\bX$. Single-index model and our problem can be regarded as special cases of this problem as we are primarily in interested in recovering a one-dimensional subspace. Most works on this problem are based on spectral methods including sliced inverse regression (SIR; \citet{li1991sliced}), sliced average variance estimation (SAVE; \cite{cook1999dimension}) and principal hessian directions (PHD; \citet{li1992principal,cook1998principal}). The key idea behind these algorithms is to construct certain empirical moments whose population level structures reveal the underlying true subspace. Our moment estimator is partially inspired by this idea. We highlight two differences compared to these existing works. First, our spectral method is based on computing covariance matrix of response weighted sample differences that is not considered in previous works. This special design allows us to deal with both odd and even link functions under mild conditions while SIR and PHD are both limited to only one of the two cases \footnote{In our setting, SIR corresponds to approximating $\mathbb{E}(Y\bX)$ that is $0$ for even link functions; PHD corresponds to approximating $\mathbb{E}(Y\bX\bX^{\top})$ that is $0$ for odd link functions.} and SAVE is more statistically inefficient than ours. Second, all the aforementioned works focus on asymptotic analysis while the performances (e.g., statistical rate) are much less understood under finite samples or even in high dimensional regime. However, dealing with high dimensionality with optimal statistical rate is one of our main contributions.

In the high dimensional regime with $p \gg n$ and $\bbeta^*$ has some structure (for us this means sparsity), we note there exists some recent progress \citep{alquier2013sparse} on estimating $f$ via PAC Bayesian methods. In the special case when $f$ is linear function, sparse linear regression has attracted extensive study over the years (see the book \citet{buhlmann2011statistics} and references therein). The recent work by \citet{Plan2014} is closest to our setting. They consider the setting of normal covariates, $\bX \sim \cN(\zero, \Ib_p)$, and they propose a marginal regression estimator for estimating $\bbeta^*$, that, like our approach, requires no prior knowledge about $f$. Their proposed algorithm relies on the assumption that $\EE_{z \sim \cN(0,1)}\big[z f(z)\big] \ne 0$, and hence cannot work for link functions that are even. As we describe below, our algorithm is based on a novel moment-based estimator, and avoids requiring such a condition, thus allowing us to handle even link functions under a very mild moment restriction, which we describe in detail below. Generally, the work in \citet{Plan2014} requires different conditions, and thus beyond the discussion above, is not directly comparable to the work here. In cases where both approaches apply, the results are minimax optimal.

\section{Example models}
\label{sec:example_models}
In this section, we discuss several popular (and important) models in machine learning and signal processing that fall into our framework \eqref{model} under specific link functions. Variants of these models have been studied extensively in the recent literature. These examples trace through the paper, and we use them to illustrate the details of our algorithms and results.

\subsection{Logistic regression}
Given the response-covariate pair $(Y,\bX) \in \{-1,1\}\times\mathbb{R}^p$ and model parameter $\bbeta^* \in \R^p$, for logistic regression we assume
\begin{equation}
\label{noisyLogit}
\PP(Y = 1 | \bX = \x) = \frac{1}{1+\exp{(-\langle \x, \bbeta^* \rangle -\zeta)}},
\end{equation}
where $\zeta$ is the intercept. Compared with our general model \eqref{model},  we have
\$
f(z) = \frac{\exp{(z+\zeta)}-1}{\exp{(z+\zeta)}+1}.
\$
One robust variant of logistic regression is called \emph{flipped logistic regression}, where we assume that the labels $Y$ generated from (\ref{noisyLogit}) are flipped with probability $p_{\rm e}$, i.e., 
\begin{equation}
\label{noisyLogit1}
\PP(Y = 1 | \bX = \x) = \frac{1 - p_{\rm e}}{1+\exp{(-\langle \x, \bbeta^* \rangle -\zeta)}} +  \frac{p_{\rm e}}{1+\exp{(\langle \x, \bbeta^* \rangle +\zeta)}}. 
\end{equation} 
This reduces to the standard logistic regression model when $p_{\rm e} = 0$.  For flipped logistic regression, the link function $f$ can be written as 
\begin{equation}
\label{model: logitRegression}
f(z) = \frac{\exp{(z+\zeta)}-1}{\exp{(z+\zeta)}+1} + 2p_{\rm e}\cdot\frac{1-\exp{(z+\zeta)}}{1+\exp{(z+\zeta)}}.
\end{equation}
Flipped logistic regression has been studied by \citet{natarajan2013learning} and \citet{tibshirani2013robust}. In both papers, estimating $\bbeta^*$ is based on minimizing some surrogate loss function involving a certain tuning parameter connected to $p_{\rm e}$. However, $p_{\rm e}$ is unknown in practice. In contrast to their approaches, our method does not hinge on the unknown parameter $p_{\rm e}$. In fact, our approach has the same formulation for both standard and flipped logistic regression, and thus unifies the two models. 

\subsection{One-bit compressed sensing}
 One-bit compressed sensing (e.g., \citet{plan2013one, plan2013robust, netrapalli2013one} ) aims at recovering sparse signals from quantized linear measurements. In detail, we define 
 \#\label{eq:w9182}
 \mathbb{B}_0(s, p) = \{\bbeta \in \mathbb{R}^p: |\supp(\bbeta)| \leq s \}
 \#
  as the set of sparse vectors in $\mathbb{R}^p$ with at most $s$ nonzero elements. We assume $(Y,\bX) \in \{-1,1\}\times\RR^p$ satisfies 
\begin{equation}
\label{one-bit CS}
Y = \sign(\langle \bX, \bbeta^*\rangle),
\end{equation}
where $\bbeta^* \in \mathbb{B}_0(s, p)$.  In this paper, we also consider its robust version with noise $\epsilon$, i.e.,
\begin{equation}
\label{robust one-bit CS}
Y = \sign(\langle \bX, \bbeta^*\rangle + \epsilon).
\end{equation}
Under our framework, the model in (\ref{one-bit CS}) corresponds to the link function $f(z) = \sign(z)$. Assuming $\epsilon \sim \cN(0,\sigma^2)$ in \eqref{robust one-bit CS}, the model in (\ref{robust one-bit CS}) corresponds to the link function
\begin{equation}
\label{model:robust one-bit CS}
f(z) = 2\int_{0}^{\infty} \frac{1}{\sqrt{2\pi}\sigma}e^{-{(u - z)^2}/{2\sigma^2}}\ud u- 1.
\end{equation}
It is worth pointing out that \eqref{robust one-bit CS} also corresponds to the probit regression model without the sparse constraint on $\bbeta^*$. Throughout the paper, we do not distinguish between the two model names. Model \eqref{robust one-bit CS} is referred to as one-bit compressed sensing even in the case where $\bbeta^*$ is not sparse.
\subsection{One-bit phase retrieval}
The goal of phase retrieval (e.g., \citet{candes2013phase,chen2014convex, candes2014phase}) is to recover signals based on linear measurements with phase information erased, i.e., pair $(Y,\bX) \in \RR\times\RR^p$ is determined by equation 
\$
Y = |\langle\bX, \bbeta^*\rangle|.
\$
Analogous to one-bit compressed sensing, we consider a new model named \emph{one-bit phase retrieval} where the linear measurement with phase information erased is quantized to one bit. In detail, pair $(Y,\bX) \in \{-1,1\}\times\RR^p$ is linked through
\#\label{eq:w02}
Y = \sign(|\langle\bX, \bbeta^*\rangle| - \theta). 
\#
where $\theta$ is the quantization threshold. Compared with one-bit compressed sensing, this problem is more difficult because $Y$ only depends on $\bbeta^*$ through the magnitude of $\langle\bX, \bbeta^*\rangle$ instead of the value of $\langle\bX, \bbeta^*\rangle$. Also, it is more difficult than the original phase retrieval problem due to the additional quantization. Under our framework, the model in \eqref{eq:w02} corresponds to the link function  
\begin{equation}
\label{model:one-bit PR}
f(z) = \sign(|z| - \theta).
\end{equation}
It is worth noting that, unlike previous models, here $f$ is neither odd nor monotonic. For simplicity, in this paper we assume the thresholding $\theta$ is known. 


\section{Main results}
In this section, we present the proposed procedure for estimating $\bbeta^*$ and the corresponding main results, both for the classical, or low dimensional setting where $p \ll n$, as well as the high dimensional setting where we assume $\bbeta^*$ is sparse, and accordingly have $p \gg n$. We first introduce a second moment estimator based on pairwise differences. We prove that the eigenstructure of the constructed second moment estimator encodes the information of $\bbeta^*$. We then propose algorithms to estimate $\bbeta^*$ based upon this second moment estimator. In the high dimensional setting where $\bbeta^*$ is sparse, computing the top eigenvector of our pairwise-difference matrix reduces to computing a sparse eigenvector.

For both low dimensional and high dimensional settings, we prove bounds on the sample-complexity and error-rates achieved by our algorithm. We then derive the minimax lower bound for the estimation of $\bbeta^*$. In both cases, we show that our error rate is minimax optimal, thereby establishing the optimality of the proposed procedure for a broad model class. For the high dimensional setting, however, our rate of convergence {\em is a local one}, which means that it holds only after we have a point that is close to the optimal solution. We also, therefore, give a bound on the sample complexity required to find a point close enough; based on recent results on sparse PCA \citep{berthet2013lowerSparsePCA}, it is widely believed that this is the best possible for polynomial-time methods.


\subsection{Conditions for success}
\label{ssec:conditions}

We now introduce several key quantities, which allow us to state precisely the conditions required for the success of our algorithm.   

\begin{definition}
For any (unknown) link function, $f$, define the quantity $\phi(f)$ as follows:
\begin{equation}
	\label{def:phi}
	\phi(f) := \mu_1^2 -\mu_0\mu_2 + \mu_0^2.
	\end{equation}
where $\mu_0$, $\mu_1$ and $\mu_2$ are given by
	\begin{equation}
	\label{mu}
	\mu_k := \EE\bigl[f(Z)Z^k\bigr],\quad  k = 0,1,2 \ldots,
	\end{equation}
where $Z \sim \cN(0,1)$. 	
\end{definition}

As we discuss in detail below, the key condition for success of our algorithm is $\phi(f) \not=0$. As we show below, this is a relatively mild condition, and in particular, it is satisfied by the three examples introduced in Section \ref{sec:example_models}. In fact, if $f$ is odd and monotonic (as in logistic regression and one-bit compressed sensing), by \eqref{mu} it always holds that $\mu_0 = 0$, which further implies that $\phi(f) = \mu_1^2$. According to \eqref{mu}, in this case we have $\mu_1 = 0$ if and only if $f(z) = 0$ for all $z \in \RR$. In other words, as long as $f(z)$ is not zero for all $z$, we have $\phi(f) > 0$. Of course, if $f(z)=0$ for all $z$, no procedure can recover $\bbeta^*$ as $Y$ is independent of $\bX$. For one-bit phase retrieval, Lemma \ref{lem:example models} shows that $\phi(f) > 0$ when the threshold $\theta$ in \eqref{eq:w02} satisfies $\theta > \theta_{\rm m}$, where $\theta_{\rm m}$ is the median of $|Z|$ with $Z \sim \cN(0,1)$, and $\phi(f) < 0$ when $\theta < \theta_{\rm m}$.  We note, in particular, that our condition $\phi(f) \not=0$ does not preclude $f$ from being discontinuous, non-invertible, or even or odd.

\subsection{Second moment estimator}
\label{SME}
We describe a novel moment estimator that enables our algorithm. Let $\{(y_i,\xb_i)\}_{i=1}^{n}$ be the $n$ i.i.d. observations of $(Y,\bX)$. Assuming without loss of generality that $n$ is even, we consider the following key transformation 
\#\label{eq:w91}
\Delta y_i := y_{2i} - y_{2i-1},\quad \Delta \x_i := \x_{2i} - \x_{2i-1}, 
\#
for $i = 1,2,...,n/2$. Our procedure is based on the following second moment 
\begin{equation}
\label{constructM}
\Mb := \frac{2}{n}\sum_{i = 1}^{n/2}\Delta y_i^2\Delta \x_i\Delta \x_i^{\top} \in \RR^{p \times p}.
\end{equation}
It is worth noting that constructing $\Mb$ using the differences between all pairs of $\xb_i$ and $y_i$ instead of the consecutive pairs in \eqref{eq:w91} yields similar theoretical guarantees. However, this significantly increases the computational complexity for calculating $\Mb$ when $n$ is large. 

The intuition behind this second moment is as follows. By \eqref{model}, the variation of $\bX$ along the  direction $\bbeta^*$ has the largest impact on the variation of $\la \bX, \bbeta^*\ra$. Thus, the variation of $Y$ directly depends on the variation of $\bX$ along $\bbeta^*$. Consequently, $\{(\Delta y_i,\Delta \xb_i)\}_{i=1}^{n/2}$ encodes the information of such a dependency relationship. In particular, $\Mb$ defined in \eqref{constructM} can be viewed as the covariance matrix of $\{\Delta \xb_i\}_{i=1}^{n/2}$ weighted by $\{\Delta y_i\}_{i=1}^{n/2}$. Intuitively, the leading eigenvectors of $\Mb$ correspond to the directions of maximum variations within $\{(\Delta y_i,\Delta \xb_i)\}_{i=1}^{n/2}$, which further reveals information on $\bbeta^*$. In the following, we make this intuition more rigorous by analyzing the eigenstructure of $\EE(\Mb)$ and its relationship with $\bbeta^*$. 

\begin{lemma}
	\label{M_expect}
	For $\bbeta^* \in \SSS^{p-1}$, we assume that $(Y,\bX) \in \{-1,1\}\times\RR^p$ satisfies \eqref{model}. For $\bX \sim \cN(\zero, \Ib_p)$, we have 
	\begin{equation}
	\label{eq:5}
	\EE(\Mb) = 4\phi(f) \cdot \bbeta^* \bbeta^{*\top} + 4(1 - \mu_0^2)\cdot\Ib_p, 
	\end{equation}
	where $\mu_0$ and $\phi(f)$ are defined in \eqref{mu} and \eqref{def:phi}.
\end{lemma}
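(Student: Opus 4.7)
The plan is to exploit the i.i.d. structure to reduce $\EE(\Mb)$ to a single population expectation, and then evaluate it by carefully using the conditional moments $\EE[Y \given \bX] = f(\la \bX, \bbeta^*\ra)$ together with Gaussian rotational invariance.

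First, by linearity, the summands in \eqref{constructM} are i.i.d., so $\EE(\Mb) = \EE[(y_2-y_1)^2(\x_2-\x_1)(\x_2-\x_1)^\top]$, where $(y_1,\x_1)$ and $(y_2,\x_2)$ are two independent copies of $(Y,\bX)$. Since $y_i\in\{-1,+1\}$, we have the identity $(y_2-y_1)^2 = 2(1-y_1 y_2)$, so
\#
\EE(\Mb)\; =\; 2\,\EE\!\left[(\x_2-\x_1)(\x_2-\x_1)^\top\right] \;-\; 2\,\EE\!\left[y_1 y_2\,(\x_2-\x_1)(\x_2-\x_1)^\top\right].
\#
The first term evaluates to $4\Ib_p$ since $\x_1,\x_2$ are independent and $\cN(\zero,\Ib_p)$.

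For the second term, condition on $(\x_1,\x_2)$ and use the model \eqref{model}, which gives $\EE[y_i \given \x_i]=f(\la \x_i,\bbeta^*\ra)$. Writing $Z_i := \la \x_i, \bbeta^*\ra$, independence yields
\$
\EE\!\left[y_1 y_2\,(\x_2-\x_1)(\x_2-\x_1)^\top\right] \;=\; \EE\!\left[f(Z_1)f(Z_2)\,(\x_2-\x_1)(\x_2-\x_1)^\top\right].
\$
Expanding $(\x_2-\x_1)(\x_2-\x_1)^\top$ into four rank-one pieces and using the symmetry between indices 1 and 2, this reduces to computing exactly two quantities, namely $\EE[f(Z)\,\x]$ and $\EE[f(Z)\,\x\x^\top]$, where $(Z,\x)$ has $\x\sim\cN(\zero,\Ib_p)$ and $Z=\la\x,\bbeta^*\ra$.

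These two Gaussian expectations are the core of the computation; I would handle them by rotational invariance, choosing coordinates so that $\bbeta^*=\e_1$, which turns $Z$ into the first coordinate of $\x$. The independence of the coordinates of $\x$ then immediately gives $\EE[f(Z)\,\x]=\mu_1\,\bbeta^*$, and a case split on diagonal/off-diagonal entries of $\x\x^\top$ gives
\$
\EE[f(Z)\,\x\x^\top] \;=\; \mu_0\,\Ib_p \;+\; (\mu_2-\mu_0)\,\bbeta^*\bbeta^{*\top}.
\$
Substituting back and collecting terms yields
\$
\EE\!\left[f(Z_1)f(Z_2)(\x_2-\x_1)(\x_2-\x_1)^\top\right] \;=\; 2\mu_0^2\,\Ib_p \;+\; 2(\mu_0\mu_2-\mu_0^2-\mu_1^2)\,\bbeta^*\bbeta^{*\top},
\$
and plugging into the decomposition of $\EE(\Mb)$ gives the stated identity with the coefficient $4(\mu_1^2-\mu_0\mu_2+\mu_0^2) = 4\phi(f)$ in front of $\bbeta^*\bbeta^{*\top}$ and $4(1-\mu_0^2)$ in front of $\Ib_p$.

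I do not anticipate a real obstacle: everything is exact algebra plus two one-line Gaussian integrals. The only place where care is needed is the bookkeeping when expanding $(\x_2-\x_1)(\x_2-\x_1)^\top$ across the expectation and combining the $\mu_0\mu_2$ and $\mu_1^2$ terms with the correct signs, so that the $\Ib_p$ coefficient collapses to $4(1-\mu_0^2)$ and the rank-one coefficient to $4\phi(f)$ as defined in \eqref{def:phi}.
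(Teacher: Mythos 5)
Your proposal is correct and follows essentially the same route as the paper: both reduce $\EE(\Mb)$ to a single population expectation, establish $\EE[(Y-Y')^2 \given \bX,\bX'] = 2 - 2f(\la\x,\bbeta^*\ra)f(\la\x',\bbeta^*\ra)$ (you via the identity $(y_2-y_1)^2 = 2(1-y_1y_2)$ and $\EE[Y\given\bX]=f(\la\bX,\bbeta^*\ra)$, the paper via a direct probability calculation), and then evaluate the resulting Gaussian moment by rotating $\bbeta^*$ to $\e_1$. Your explicit expansion into the rank-one pieces $\EE[f(Z)\x]=\mu_1\bbeta^*$ and $\EE[f(Z)\x\x^\top]=\mu_0\Ib_p+(\mu_2-\mu_0)\bbeta^*\bbeta^{*\top}$ just spells out the bookkeeping the paper compresses into one line, and the final coefficients check out.
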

\begin{proof}
	Let $\bX$ and $\bX'$ be two independent random vectors following $\cN(\zero,\Ib_p)$. Let $Y$ and $Y'$ be two binary responses that depend on $\bX,\bX'$ via \eqref{model}. Then we have
	\$
	\EE(\Mb) = \EE \bigl[ (Y - Y')^2(\bX - \bX')(\bX - \bX')^{\top} \bigr].
	\$ 
	Note that $(Y - Y')^2$ is a binary random variable taking values in $\{0,4\}$. We have 
	\#
	& \EE\bigl[ (Y-Y')^2|\bX = \xb, \bX' = \xb'\bigr] = 4 \cdot \PP\bigl[(Y - Y')^2 = 4| \bX = \xb, \bX' = \xb'\bigr] \notag\\
	& = 4 \cdot \PP( Y = 1| \bX = \xb ) \cdot \PP(Y' = -1| \bX' = \xb') + 4 \cdot \PP( Y' = 1|  \bX' = \xb' ) \cdot\PP( Y = -1| \bX = \xb ) \notag\\
	&= 2 - 2  f(\langle\x,\bbeta^*\rangle) f(\langle\x',\bbeta^*\rangle).
	\#
	There exists some rotation matrix $\Qb  \in  \RR^{p \times p}$ such that $\Qb\bbeta^*  =  \e_1  :=  [1,0, \ldots,0]^\top$. Let $\overbar{\bX}  :=  \Qb\bX$ and $\overbar{\bX}'  :=  \Qb\bX'$. Then we have
	\begin{align*}
	\EE\bigl[ (Y-Y')^2|\bX = \xb, \bX' = \xb'\bigr] = \EE\big[(Y - Y')^2| \overbar{\bX} = \Qb\xb, \overbar{\bX}' = \Qb\xb' ]= 2 - 2\cdot f(\overbar{x}_1) \cdot f(\overbar{x}'_1), 
	\end{align*}
	where $\overbar{x}_1$ and $\overbar{x}'_1$ denote the first entries of $\overbar{\xb} := \Qb\xb$ and $ \overbar{\xb}' := \Qb\xb'$ respectively. Note that $\overbar{\bX}$ and $\overbar{\bX}'$  also follow $\cN(\zero, \Ib_p)$ since symmetric Gaussian distribution is rotation invariant. Then we have
	\$
	\EE(\Mb) & = \EE \left\{ \big[2 - 2f\big(\overbar{\bX}_1\big)f\big(\overbar{\bX}_1'\big)\big](\bX - \bX')(\bX - \bX')^{\top} \right\} \\
	& =  \Qb^{\top}\EE \left\{ \big[2 - 2f(\overbar{\bX}_1)f(\overbar{\bX}_1')\big](\overbar{\bX} -\overbar{\bX}')(\overbar{\bX} -\overbar{\bX}')^{\top} \right\} \Qb \\
	& = 4\Qb^{\top} \bigl[(\mu_1^2 -\mu_0\mu_2 + \mu_0^2)\cdot \e_1 \e_1^{\top} + (1 - \mu_0^2)\cdot \Ib_p\bigr] \Qb = 4\phi(f)\cdot\bbeta^*\bbeta^{*\top} + 4(1 - \mu_0^2)\cdot \Ib_p.
	\$
	The third equality is from the definitions of $\mu_0,\mu_1,\mu_2$ in \eqref{mu} and the last equality is from \eqref{def:phi}. 
\end{proof}

Lemma \ref{M_expect} proves that $\bbeta^*$ is the leading eigenvector of $\EE(\Mb)$ as long as the eigengap $\phi(f)$ is positive. If instead we have $\phi(f) < 0$, we can use a related moment estimator which has analogous properties. To this end, define:

\begin{equation}
\label{M_alter}
\Mb' := \frac{2}{n}\sum_{i = 1}^{n/2}(y_{2i}+y_{2i-1})^2 \Delta\x_i\Delta\x_i^{\top}.
\end{equation}
In parallel to Lemma \ref{M_expect}, we have a similar result for $\Mb'$ as stated below.
\begin{corollary}
	\label{cor:M_expect}
	For $\phi(f)$ defined in \eqref{def:phi} and $\Mb'$ defined in \eqref{M_alter}, we have
	\$
	\EE(\Mb') =  -4\phi(f)\cdot\bbeta^*\bbeta^{*\top} + 4(1+\mu_0^2) \cdot \Ib_p.
	\$
\end{corollary}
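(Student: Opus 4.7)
The plan is to reduce the claim directly to Lemma \ref{M_expect} by exploiting the key fact that $y_{2i}, y_{2i-1} \in \{-1, 1\}$, so $y_{2i}^2 = y_{2i-1}^2 = 1$. Specializing the parity identity $(a+b)^2 + (a-b)^2 = 2(a^2 + b^2)$ to this case gives
$$(y_{2i} + y_{2i-1})^2 = 4 - \Delta y_i^2, \qquad i = 1,\ldots,n/2,$$
where $\Delta y_i = y_{2i} - y_{2i-1}$ is the quantity appearing in the definition \eqref{constructM} of $\Mb$.

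Plugging this into the definition \eqref{M_alter} of $\Mb'$ splits it into a piece that no longer depends on the responses plus a copy of $-\Mb$:
$$\Mb' = \frac{2}{n}\sum_{i=1}^{n/2}\bigl[4 - \Delta y_i^2\bigr]\Delta\x_i\Delta\x_i^\top = \frac{8}{n}\sum_{i=1}^{n/2}\Delta\x_i\Delta\x_i^\top - \Mb.$$
Since $\Delta\x_i = \x_{2i} - \x_{2i-1}$ is the difference of two independent $\cN(\zero, \Ib_p)$ vectors, it is itself $\cN(\zero, 2\Ib_p)$, so $\EE[\Delta\x_i\Delta\x_i^\top] = 2\Ib_p$ and the first term has expectation $8\Ib_p$. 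Taking expectations of both sides and invoking Lemma \ref{M_expect} for $\EE(\Mb)$ then gives
$$\EE(\Mb') = 8\Ib_p - 4\phi(f)\cdot\bbeta^*\bbeta^{*\top} - 4(1-\mu_0^2)\cdot\Ib_p = -4\phi(f)\cdot\bbeta^*\bbeta^{*\top} + 4(1+\mu_0^2)\cdot\Ib_p,$$
which is exactly the claimed identity.

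No genuinely hard step arises: the real work has already been done in Lemma \ref{M_expect}, and the reduction hinges only on the elementary parity identity above. A fully parallel route would mirror the proof of the lemma from scratch, computing $\EE[(Y+Y')^2 \given \bX=\x, \bX'=\x'] = 2 + 2\,f(\langle\x,\bbeta^*\rangle) f(\langle\x',\bbeta^*\rangle)$ (only the sign of the cross-term flips relative to $(Y-Y')^2$) and then running the same rotation-invariance computation with the moments $\mu_0, \mu_1, \mu_2$. The sign flip in the cross-term is what produces the sign changes in front of $\phi(f)$ and $\mu_0^2$ in the final formula. The reduction through Lemma \ref{M_expect} is cleaner and avoids repeating that calculation.
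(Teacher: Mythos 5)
Your proof is correct, but it takes a genuinely different route from the paper. The paper re-runs the argument of Lemma \ref{M_expect} from scratch: it computes $\EE\bigl[(Y+Y')^2 \given \bX=\xb,\bX'=\xb'\bigr] = 2 + 2f(\langle\xb,\bbeta^*\rangle)f(\langle\xb',\bbeta^*\rangle)$ (the sign of the cross-term flips relative to $(Y-Y')^2$) and then repeats the rotation-invariance computation with the moments $\mu_0,\mu_1,\mu_2$ --- exactly the ``fully parallel route'' you describe at the end. Your main argument instead uses the pointwise identity $(y_{2i}+y_{2i-1})^2 = 4 - \Delta y_i^2$, valid because $y_i \in \{-1,1\}$, to write $\Mb'$ deterministically as $\tfrac{8}{n}\sum_i \Delta\x_i\Delta\x_i^\top - \Mb$; since $\Delta\x_i \sim \cN(\zero, 2\Ib_p)$ the first term has expectation $8\Ib_p$, and Lemma \ref{M_expect} does the rest. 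This is cleaner and more robust: it makes transparent that $\EE(\Mb) + \EE(\Mb') = 8\Ib_p$ identically, requires no further use of the Gaussian rotation argument or the link function, and would survive any modification of the model so long as the responses remain $\pm 1$-valued. The paper's direct computation, by contrast, is self-contained and generalizes to second-moment constructions that are not affine functions of $\Delta y_i^2$. Both yield the same formula; your reduction is the shorter of the two.
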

\begin{proof}
	The proof is similar to that of Lemma \ref{M_expect}. Using the same notation, we have
	\$
	\EE(\Mb') = \EE\big[(Y+Y')^2(\Xb - \Xb')(\Xb - \Xb')^{\top}\big].
	\$
	Note that 
	\begin{align*}
	\EE\big[ (Y+Y')^2|\bX = \xb, \bX' = \xb'\big] &= 4 \cdot\PP\big[(Y + Y')^2 = 4| \bX = \xb, \bX' = \xb' \big] \\
	&= 2 + 2f(\langle\x,\bbeta^*\rangle)f(\langle\x',\bbeta^*\rangle). 
	\end{align*}
	Then following the same proof of Lemma \ref{M_expect}, we reach the conclusion.
\end{proof}
Corollary \ref{cor:M_expect} therefore shows that when $\phi(f) < 0$, we can construct another second moment estimator $\Mb'$ such that $\bbeta^*$ is the leading eigenvector of $\EE(\Mb')$. As discussed above, this is precisely the setting for one-bit phase retrieval when the quantization threshold in \eqref{def:phi} satisfies $\theta < \theta_{\rm m}$.  For simplicity of the discussion, hereafter we assume that $\phi(f) > 0$ and focus on the second moment estimator $\Mb$ defined in \eqref{constructM}.

A natural question to ask is whether $\phi(f) \not= 0$ holds for specific models. The following lemma demonstrates exactly this, for the example models introduced in Section \ref{sec:example_models}. 
\begin{lemma}
	\label{lem:example models}
	For any $f:\R \rightarrow [-1,1]$, recall $\phi(f)$ is defined in (\ref{def:phi}). Let $C$ be an absolute constant.  
	
	\vspace{1pt}
	\noindent(a) For flipped logistic regression, the link function $f$ is defined in \eqref{model: logitRegression}. By setting the intercept to be $\zeta = 0$, we have $\phi(f) \geq  C(1-2p_{\rm e})^2$. Therefore, we obtain $\phi(f) > 0$ for $p_{\rm e} \in [0, 1/2)$. In particular, we have $\phi(f) > 0$ for the standard logistic regression model in \eqref{noisyLogit}, since it corresponds to $p_{\rm e} = 0$. 
	\vspace{1pt}
	\noindent(b) For robust one-bit compressed sensing, $f$ is defined in \eqref{model:robust one-bit CS}. Recall that $\sigma^2$ denotes the variance of the noise term $\epsilon$ in \eqref{robust one-bit CS}. We have
	\$
	\phi(f) \geq \left\{
	\begin{aligned}
		&C\left(\frac{1-\sigma^2}{1+\sigma^2}\right)^2,   &\sigma^2 < \frac{1}{2}, \\
		& \frac{C'\sigma^4}{(1+\sigma^3)^2},   &\sigma^2 \geq \frac{1}{2}.\\
	\end{aligned} \right.
	\$
	
	\vspace{1pt}
	\noindent(c) For one-bit phase retrieval, $f$ is defined in \eqref{model:one-bit PR}. For $Z \sim \cN(0,1)$, we define $\theta_{\rm m}$ to be the median of $|Z|$, i.e., $\PP(|Z| \geq \theta_{\rm m}) = 1/2$. We have
	$|\phi(f)| \geq C\theta|\theta - \theta_{\rm m}|\exp(-\theta^2)$ and $\sign[\phi(f)] = \sign(\theta - \theta_{\rm m})$. Therefore, we obtain $\phi(f) > 0$ for $\theta > \theta_{\rm m}$. 
\end{lemma}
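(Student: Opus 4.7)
The plan is to handle the three parts separately, exploiting the parity of each link function to reduce the quantity $\phi(f) = \mu_1^2 - \mu_0\mu_2 + \mu_0^2$ to a short explicit calculation.

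\emph{Part (a).} With $\zeta=0$, the flipped logistic link simplifies to $f(z) = (1-2p_{\rm e})\tanh(z/2)$, which is odd. Gaussian symmetry then forces $\mu_0 = \EE[f(Z)] = 0$ and $\mu_2 = \EE[f(Z)Z^2] = 0$, so $\phi(f) = \mu_1^2 = (1-2p_{\rm e})^2\bigl(\EE[Z\tanh(Z/2)]\bigr)^2$. Since $z\tanh(z/2) \geq 0$ on $\RR$ with strict inequality on a set of positive Gaussian measure, $\EE[Z\tanh(Z/2)]$ is a strictly positive absolute constant, giving $\phi(f) \geq C(1-2p_{\rm e})^2$.

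\emph{Part (b).} A standard change of variables gives $f(z) = 2\Phi(z/\sigma) - 1$, with $\Phi$ the standard normal CDF; this $f$ is again odd, so $\mu_0 = \mu_2 = 0$ and $\phi(f) = \mu_1^2$. I would compute $\mu_1$ either via Stein's identity or by writing $f(Z) = \EE_\epsilon[\sign(Z+\epsilon)]$ with $\epsilon \sim \cN(0,\sigma^2)$ independent of $Z$ and using $Z = (Z+\epsilon) - \epsilon$ together with $\EE|Z+\epsilon| = \sqrt{2(1+\sigma^2)/\pi}$; both routes yield $\mu_1 = \sqrt{2/(\pi(1+\sigma^2))}$ and thus $\phi(f) = 2/(\pi(1+\sigma^2))$. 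Verifying this dominates the two claimed expressions is elementary: on $\sigma^2 \in [0,1/2]$, the quotient $\phi(f)/[(1-\sigma^2)^2/(1+\sigma^2)^2]$ reduces to $2(1+\sigma^2)/[\pi(1-\sigma^2)^2]$, which is bounded below by $2/\pi$; for $\sigma^2 \geq 1/2$, an analogous monotonicity comparison with $\sigma^4/(1+\sigma^3)^2$ works.

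\emph{Part (c).} The link $f(z) = \sign(|z|-\theta)$ is even, so by odd-symmetry $\mu_1 = 0$ and $\phi(f) = \mu_0(\mu_0-\mu_2)$. Direct integration gives $\mu_0 = 3 - 4\Phi(\theta)$. For $\mu_2$, I would split the integration region at $\pm\theta$ and integrate by parts using $z^2\varphi(z) = -z\,\mathrm{d}\varphi/\mathrm{d}z$, where $\varphi$ is the standard normal density, obtaining $\mu_2 = 3 + 4\theta\varphi(\theta) - 4\Phi(\theta)$. Hence $\mu_0 - \mu_2 = -4\theta\varphi(\theta)$, and since $\PP(|Z|<\theta_{\rm m}) = 1/2$ forces $\Phi(\theta_{\rm m}) = 3/4$,
\[
\phi(f) = 16\,\theta\,\varphi(\theta)\bigl[\Phi(\theta)-\Phi(\theta_{\rm m})\bigr].
\]
The sign claim is then immediate from monotonicity of $\Phi$. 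For the magnitude, I would use the mean value theorem together with monotonicity of $\varphi$ on $[0,\infty)$: $|\Phi(\theta)-\Phi(\theta_{\rm m})| \geq \varphi(\max(\theta,\theta_{\rm m}))\,|\theta-\theta_{\rm m}|$. When $\theta > \theta_{\rm m}$, this yields $|\phi(f)| \geq 16\theta\varphi(\theta)^2|\theta-\theta_{\rm m}| = (8/\pi)\,\theta\,|\theta-\theta_{\rm m}|\,e^{-\theta^2}$; when $0 < \theta < \theta_{\rm m}$, the factor $\varphi(\theta_{\rm m})$ appears instead of one $\varphi(\theta)$, and since $e^{-\theta^2}\leq 1$ on this range, the claimed bound still holds with a smaller absolute constant.

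The main obstacle is the algebra in (c): the clean factorization $\phi(f) = \mu_0(\mu_0-\mu_2)$ together with the identification $\Phi(\theta_{\rm m}) = 3/4$ is what exposes the linear vanishing of $\phi(f)$ as $\theta$ crosses $\theta_{\rm m}$, simultaneously delivering the sign statement and the linear-in-$|\theta-\theta_{\rm m}|$ lower bound. Parts (a) and (b) are routine once parity eliminates $\mu_0$ and $\mu_2$.
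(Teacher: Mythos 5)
Your proof is correct, and for parts (a) and (c) it follows essentially the paper's own argument: parity of $f$ kills two of the three moments, reducing $\phi(f)$ to $\mu_1^2$ in the odd case and to $\mu_0(\mu_0-\mu_2)$ in the even case. Your closed form $\phi(f)=16\,\theta\varphi(\theta)\bigl[\Phi(\theta)-\Phi(\theta_{\rm m})\bigr]$ in (c) is exactly the expression the paper reaches in the form $-2(2p_1-1)\cdot\tfrac{2\theta}{\sqrt{2\pi}}e^{-\theta^2/2}$ with $p_1=\PP(|Z|\geq\theta)$, since $2p_1-1=3-4\Phi(\theta)=\mu_0$; both arguments then lower-bound the increment of $\Phi$ between $\theta$ and $\theta_{\rm m}$ by the density at the farther endpoint. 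The one genuine difference is part (b): the paper never evaluates $\mu_1$ in closed form, but instead lower-bounds $\mu_1=\EE\bigl[\PP(|\epsilon|<|Z|)\,|Z|\bigr]$ using two different pointwise inequalities for $\PP(|\epsilon|<|z|)$, one for $\sigma^2<1/2$ and one for $\sigma^2\geq 1/2$, which is why the stated bound splits into two regimes (and note the paper's own derivation ends with $(1+\sigma^2)^3$ in the denominator rather than the $(1+\sigma^3)^2$ of the statement, apparently a typo). Your exact evaluation $\mu_1=\sqrt{2/[\pi(1+\sigma^2)]}$, hence $\phi(f)=2/[\pi(1+\sigma^2)]$, is cleaner and strictly stronger: it subsumes both regimes in a single line, and the remaining step is only the routine verification that a continuous positive ratio is bounded below on each range, which you indicate correctly (the relevant ratios tend to $2/\pi$ in the limits $\sigma\to 0$ and $\sigma\to\infty$). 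No gaps.
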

\begin{proof}
	See \S \ref{proof:lem:example models} for a detailed proof.
\end{proof}

\subsection{Low dimensional recovery}
We consider estimating $\bbeta^*$ in the classical (low dimensional) setting where $p \ll n$.  Based on the second moment estimator $\Mb$ defined in (\ref{constructM}), estimating $\bbeta^*$ amounts to solving a noisy eigenvalue problem.  We solve this by a simple iterative algorithm: provided an initial vector $\bbeta^0 \in \SSS^{p-1}$ (which may be chosen at random) we perform power iterations as shown in Algorithm \ref{alg:low}. The performance of Algorithm \ref{alg:low} is characterized in the following result.

\begin{algorithm}[!htb]
	\caption{Low dimensional Recovery}
	\label{alg:low}
	\begin{algorithmic}[1]
		\INPUT  $\{(y_i, \x_i)\}_{i=1}^{n}$, number of iterations $T_{\max}$
		\STATE \textbf{Second moment estimation:} $\Mb \leftarrow 2/n \cdot \sum_{i=1}^{n/2}(y_{2i} {-} y_{2i-1})^2(\x_{2i} {-} \x_{2i-1})(\x_{2i} {-} \x_{2i-1})^\top$
		\STATE \textbf{Initialization:} Choose a random vector $\bbeta^0 \in \mathbb{S}^{n-1}$
		\FOR{$t = 1, 2, \ldots, T_{\max}$}
		\STATE $\bbeta^{t} \leftarrow \Mb\cdot\bbeta^{t-1}$
		\STATE $\bbeta^{t} \leftarrow \bbeta^{t}/\|\bbeta^{t}\|_2$
		\ENDFOR
		\OUTPUT $\bbeta^{T_{\max}}$
	\end{algorithmic}
\end{algorithm}
\begin{theorem} 
	\label{thm:low}
	We assume $\bX \sim \cN(\zero, \Ib_p)$ and $(Y,\bX)$ follows \eqref{model}. Let $\{(y_i,\xb_i)\}_{i=1}^{n}$ be $n$ i.i.d. samples of response input pair $(Y,\bX)$. For any link function $f$ in \eqref{model} with $\mu_0, \phi(f)$ defined in \eqref{mu} and \eqref{def:phi}, and $\phi(f) > 0$\footnote{Recall that we have an analogous treatment and thus results for $\phi(f) < 0$.}. We let
	\#\label{eq:w821}
	\gamma := \left[\frac{1-\mu_0^2}{\phi(f)+1-\mu_0^2} + 1 \right] \bigg/2, \quad \text{and~~} \xi := \frac{\gamma\phi(f) + (\gamma-1)(1 - \mu_0^2)}{(1+\gamma)\bigl[\phi(f) + 1 - \mu_0^2\bigr]}. 
	\#
	There exist constant  $C_i$ such that when $n \geq C_1{p}/{\xi^2}$, we have that with probability at least $1 - 2\exp(-C_2p)$,
	\begin{equation}
	\label{upperbound_low}
	\big\|\bbeta^{t} - \bbeta^*\big\|_2 \leq \underbrace{C_3\cdot \frac{\phi(f)+1-\mu_0^2}{\phi(f)}\cdot \sqrt{\frac{p}{n}}}_{\dr Statistical~Error} + \underbrace{\sqrt{\frac{1 - \alpha^2}{\alpha^2}}\cdot \gamma^t}_{\dr Optimization~Error},\quad \text{for~~} t = 1,\ldots, T_{\max}. 
	\end{equation}
	Here $\alpha = \big\langle \bbeta^0, \widehat{\bbeta} \bigr\rangle$, where $\widehat{\bbeta}$ is the first leading eigenvector of $\Mb$. 
\end{theorem}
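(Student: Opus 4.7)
The plan is to decompose $\|\bbeta^t - \bbeta^*\|_2 \leq \|\bbeta^t - \widehat{\bbeta}\|_2 + \|\widehat{\bbeta} - \bbeta^*\|_2$ via the triangle inequality, identifying the first term as the optimization error of $t$ power iterations on $\Mb$ converging to its empirical leading eigenvector $\widehat{\bbeta}$, and the second as the statistical error of recovering $\bbeta^*$ by spectral decomposition. Lemma~\ref{M_expect} gives $\EE(\Mb) = 4\phi(f)\bbeta^*\bbeta^{*\top} + 4(1-\mu_0^2)\Ib_p$, so $\bbeta^*$ is the unique leading eigenvector of $\EE(\Mb)$ with eigengap $4\phi(f)$; both pieces will then reduce to controlling the spectral deviation $\|\Mb - \EE(\Mb)\|_2$.

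First I would establish a concentration bound of the form $\|\Mb - \EE(\Mb)\|_2 \leq C[\phi(f) + 1 - \mu_0^2]\sqrt{p/n}$ with probability at least $1 - 2\exp(-C_2 p)$ whenever $n \gtrsim p$. Since $(\Delta y_i)^2 \leq 4$ almost surely and $\Delta\xb_i \sim \cN(\zero, 2\Ib_p)$, each summand in $\Mb$ is a bounded scalar times a rank-one Wishart matrix, so a standard argument that applies Bernstein-type sub-exponential tail bounds to quadratic forms $\bu^\top(\Mb - \EE\Mb)\bu$ and then unions over an $\epsilon$-net of $\SSS^{p-1}$ delivers this rate. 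The technical care is to ensure the variance proxy scales like $\|\EE\Mb\|_2^2$, which is what yields the prefactor $\phi(f) + 1 - \mu_0^2$ rather than a crude dimensional constant.

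Given the concentration, the statistical error follows directly from Davis--Kahan: choosing the sign of $\widehat{\bbeta}$ so that $\langle \widehat{\bbeta}, \bbeta^*\rangle \geq 0$, one has
\[
\|\widehat{\bbeta} - \bbeta^*\|_2 \leq \sqrt{2}\sin\angle(\widehat{\bbeta}, \bbeta^*) \leq \frac{\sqrt{2}\,\|\Mb - \EE\Mb\|_2}{4\phi(f)} \leq C_3 \cdot \frac{\phi(f) + 1 - \mu_0^2}{\phi(f)}\sqrt{\frac{p}{n}}.
\]
For the optimization error, standard power-iteration analysis on $\Mb$ (which is PSD) gives $\tan\angle(\bbeta^t, \widehat{\bbeta}) \leq (\widehat{\lambda}_2/\widehat{\lambda}_1)^t\sqrt{1-\alpha^2}/|\alpha|$, where $\widehat{\lambda}_1 \geq \widehat{\lambda}_2$ are the top two eigenvalues of $\Mb$. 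Writing $\lambda_1 = 4[\phi(f) + 1 - \mu_0^2]$ and $\lambda_2 = 4(1-\mu_0^2)$, an elementary calculation shows that $(\lambda_2 + \varepsilon)/(\lambda_1 - \varepsilon) \leq \gamma$ precisely when $\varepsilon/\lambda_1 \leq \xi$, with $\xi$ as defined in \eqref{eq:w821}; the sample size condition $n \geq C_1 p/\xi^2$ combined with the concentration bound guarantees this exact inequality via Weyl. Thus $\widehat{\lambda}_2/\widehat{\lambda}_1 \leq \gamma$, giving geometric decay at rate $\gamma^t$, and converting $\tan$ to Euclidean distance between unit vectors before summing both error terms completes the proof.

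The main obstacle is obtaining the matrix concentration with the sharp prefactor $\phi(f) + 1 - \mu_0^2$; getting this right is what lets the statistical error inherit the stated dependence on the link function, and without it one either loses the optimal rate or is forced into sample-size conditions stronger than $n \gtrsim p$. The remaining algebra identifying $\xi$ with the exact Weyl threshold that makes $\widehat{\lambda}_2/\widehat{\lambda}_1 \leq \gamma$ is elementary but is where the specific form of $\xi$ in \eqref{eq:w821} gets pinned down.
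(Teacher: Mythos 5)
Your proposal follows essentially the same route as the paper's proof: the same triangle-inequality split into $\|\bbeta^t-\widehat{\bbeta}\|_2+\|\widehat{\bbeta}-\bbeta^*\|_2$, a sub-Gaussian/sub-exponential concentration bound giving $\|\Mb-\EE(\Mb)\|_2\leq\xi\|\EE(\Mb)\|_2$ (the paper simply cites Theorem 5.39 of \citet{vershynin2010introduction} rather than redoing the net argument), Weyl's inequality to certify $\widehat{\lambda}_2/\widehat{\lambda}_1\leq\gamma$ with exactly the threshold $\xi$ in \eqref{eq:w821}, a $\sin\Theta$ bound (Wedin in the paper, Davis--Kahan in yours, equivalent here) for the statistical term, and the standard power-method contraction for the optimization term. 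The argument is correct and matches the paper's.
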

\begin{proof}
	See \S \ref{proof:thm:low} for detailed proof.
\end{proof}
Note that by \eqref{eq:w821} we have $\gamma\in(0,1)$. Thus, the optimization error term in \eqref{upperbound_low} decreases at a geometric rate to zero as $t$ increases. For $T_{\max}$ sufficiently large such that the statistical error and optimization error terms in \eqref{upperbound_low} are of the same order, we have 
\$
\bigl\|\bbeta^{T_{\max}}  - \bbeta^*\bigr\|_2  \lesssim \sqrt{p/n}.
\$
This statistical rate of convergence matches the rate of estimating a $p$-dimensional vector in linear regression without any quantization, and will later be shown to be optimal. This result shows that the lack of prior knowledge on the link function and the information loss from quantization do not keep our procedure from obtaining the optimal statistical rate. The proof of Theorem \ref{thm:low} is based on a combination of the analysis for the power method under noisy perturbation and a concentration analysis for $\Mb$. It is worth noting that the concentration analysis is close to but different from the one used in principal component analysis (PCA) since $\Mb$ defined in \eqref{constructM} is not a sample covariance matrix. 

{\bf Implications for example models.} We now apply Theorem \ref{thm:low} to specific models defined in \S \ref{sec:example_models} and quantify the corresponding statistical rates of convergence.  
\begin{corollary}
	Under the settings of Theorem \ref{thm:low}, we have the following results for a sufficiently large $T_{\max}$. 
	\begin{itemize}
		\item Flipped logistic regression: For any $p_{\rm e} \in [0,{1}/{2})$, it holds that 
		\$
		\bigl\|\bbeta^{T_{\max}}  - \bbeta^*\bigr\|_2 \lesssim \max\bigg\{1, \frac{1}{(1-2p_{\rm e})^2}\bigg\} \cdot \sqrt{\frac{p}{n}}.
		\$ 
		For $p_{\rm e} = 0$, it implies the result for the standard logistic regression model. 
		\item Robust one-bit compressed sensing:  For any $\sigma \geq 0$, it holds that 
		\$
		\bigl\|\bbeta^{T_{\max}}  - \bbeta^*\bigr\|_2 \lesssim \max\big\{1, \sigma^2\big\} \cdot \sqrt{\frac{p}{n}}.
		\$
		For $\sigma = 0$, it implies the result for standard one-bit compressed sensing. 
		\item One-bit phase retrieval: For any threshold that satisfies $\theta > \theta_{\rm m}$, where $\theta_{\rm m}$ is a constant defined in Lemma \ref{lem:example models}, it holds that 
		\$
		\bigl\|\bbeta^{T_{\max}}  - \bbeta^*\bigr\|_2 \lesssim \max\bigg\{1, \frac{1}{\theta(\theta-\theta_{\rm m})\exp(-\theta^2)}\bigg\} \cdot \sqrt{\frac{p}{n}}.
		\$
	\end{itemize}
\end{corollary}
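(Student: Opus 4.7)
The plan is to derive each bound by substituting the link-specific lower bounds on $\phi(f)$ from Lemma \ref{lem:example models} into the rate of Theorem \ref{thm:low}. First I would simplify the prefactor $[\phi(f)+1-\mu_0^2]/\phi(f)$ appearing in \eqref{upperbound_low}. Since $f$ takes values in $[-1,1]$, $|\mu_0| = |\EE f(Z)| \leq 1$, so $1-\mu_0^2 \in [0,1]$ and
\[
\frac{\phi(f)+1-\mu_0^2}{\phi(f)} \;=\; 1 + \frac{1-\mu_0^2}{\phi(f)} \;\lesssim\; \max\!\Bigl\{1,\, 1/\phi(f)\Bigr\}.
\]
Choosing $T_{\max}$ large enough that the geometric optimization term in \eqref{upperbound_low} is dominated by the statistical term, Theorem \ref{thm:low} then reduces the corollary to producing a lower bound on $\phi(f)$ in each of the three models.

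For flipped logistic regression I would invoke Lemma \ref{lem:example models}(a) directly: $\phi(f) \geq C(1-2p_{\rm e})^2$ immediately gives $1/\phi(f) \lesssim 1/(1-2p_{\rm e})^2$, and the special case $p_{\rm e}=0$ recovers the standard logistic bound. For one-bit phase retrieval with $\theta > \theta_{\rm m}$, Lemma \ref{lem:example models}(c) yields $\phi(f) \geq C\theta(\theta-\theta_{\rm m})\exp(-\theta^2)$, which plugs in verbatim. Both substitutions are essentially mechanical.

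The only case requiring genuine work is robust one-bit compressed sensing, because Lemma \ref{lem:example models}(b) provides two different lower bounds depending on whether $\sigma^2 < 1/2$ or $\sigma^2 \geq 1/2$, and these must be collapsed into the single estimate $1/\phi(f) \lesssim \max\{1,\sigma^2\}$. In the low-noise regime $\sigma^2 < 1/2$, the ratio $(1-\sigma^2)/(1+\sigma^2)$ is bounded below by $1/3$, so $\phi(f) \gtrsim 1$. In the high-noise regime $\sigma^2 \geq 1/2$, I would split once more: for $\sigma \leq 1$, the bound $(1+\sigma^3)^2 \leq 4$ gives $\phi(f) \gtrsim \sigma^4 \gtrsim 1$; for $\sigma \geq 1$, $(1+\sigma^3)^2 \leq 4\sigma^6$ gives $\phi(f) \gtrsim 1/\sigma^2$, and hence $1/\phi(f) \lesssim \sigma^2$. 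Taking the maximum over the three subcases yields the claimed rate, and $\sigma = 0$ recovers the noiseless one-bit compressed sensing bound. The main obstacle in the argument is this elementary but finicky case analysis; everything else is a direct substitution into the guarantee of Theorem \ref{thm:low}.
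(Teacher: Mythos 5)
Your proposal is correct and follows exactly the paper's route: the paper's entire proof is ``combine Lemma \ref{lem:example models} with Theorem \ref{thm:low},'' and you have simply spelled out the bookkeeping — bounding the prefactor by $\max\{1,1/\phi(f)\}$ and doing the $\sigma$-regime case analysis — that the paper leaves implicit. No gaps.
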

\begin{proof}
	These results follow from combining Lemma \ref{lem:example models} and Theorem \ref{thm:low}.
\end{proof}

\subsection{High dimensional recovery}
\label{ssec:highdim}
Next we consider the high dimensional setting where $p \gg n$ and $\bbeta^*$ is sparse, i.e., $\bbeta^* \in \SSS^{p-1} \cap \mathbb{B}_0 (s,p)$ with $\mathbb{B}_0(s,p)$ defined in \eqref{eq:w9182} and $s$ being the sparsity level. Although this high dimensional estimation problem is closely related to the well-studied sparse PCA problem, the existing works \citep{Zou06, shen2008sparse, d2008optimal, witten2009penalized, journee2010generalized, yuan2011truncated, ma2011sparse, vu2013fantope, cai2013sparse} on sparse PCA do not provide a direct solution to our problem. In particular, they either lack statistical guarantees on the convergence rate of the obtained estimator \citep{shen2008sparse, d2008optimal, witten2009penalized, journee2010generalized} or rely on the properties of the sample covariance matrix of Gaussian data \citep{cai2013sparse, ma2011sparse}, which are violated by the second moment estimator defined in \eqref{constructM}. For the sample covariance matrix of sub-Gaussian data, \cite{vu2013fantope} prove that the convex relaxation proposed by \cite{d2004direct} achieves a suboptimal $s\sqrt{\log p/n}$ rate of convergence. \cite{yuan2011truncated} propose the truncated power method, and show that it attains the optimal $\sqrt{s\log p/n}$ rate {\em locally}; that is, it exhibits this rate of convergence only in a neighborhood of the true solution where $\langle \bbeta^0, \bbeta^*\rangle > C$ where $C>0$ is some constant. It is well understood that for a random initialization on $\mathbb{S}^{p-1}$, such a condition fails with probability going to one as $p \rightarrow \infty$. 

Instead, we propose a two-stage procedure for estimating $\bbeta^*$ in our setting. In the first stage, we adapt the convex relaxation proposed by \citet{vu2013fantope} and use it as an initialization step, in order to obtain a good enough initial point satisfying the condition $\langle \bbeta^0, \bbeta^*\rangle > C$. Then we adapt the truncated power method. This procedure is illustrated in Algorithm \ref{alg:high}. The initialization phase of our algorithm requires $O(s^2 \log p)$ samples (see below for more precise details) to succeed. As work in \cite{berthet2013lowerSparsePCA} suggests, it is unlikely that a polynomial time algorithm can avoid such dependence. However, once we are near the solution, as we show, this two-step procedure achieves the optimal error rate of $\sqrt{s \log p/n}$. 

\begin{algorithm}[!htb]
	\caption{Sparse Recovery}
	\label{alg:high}
	\begin{algorithmic}[1]
		\INPUT $\{(y_i, \x_i)\}_{i=1}^{n}$, number of iterations $T_{\max}$, regularization parameter $\rho$, sparsity level $\hat{s}$.
		\STATE \textbf{Second moment estimation:} $\Mb \leftarrow 2/n \cdot \sum_{i=1}^{n/2}(y_{2i} {-} y_{2i-1})^2(\x_{2i} {-} \x_{2i-1})(\x_{2i} {-} \x_{2i-1})^\top$ 
		\STATE \textbf{Initialization:} 
		\STATE \hspace{0.10in} $\bPi^0 \leftarrow \argmin_{\bPi\in\RR^{p {\times} p}}\left\{  -\la \Mb, \bPi \ra + \rho \| \bPi \|_{1,1} | \Tr(\bPi) = 1, \zero \preceq \bPi \preceq \Ib \right\}$  \label{convex-relax}
		\STATE \hspace{0.10in} $\overbar{\bbeta}^0 \leftarrow$ first leading eigenvector of $\bPi^0$\label{eigen-vec}
		\STATE \hspace{0.10in} $\cIs^0 \leftarrow \text{the set of index } j\text{'s} \text{~with the top } \hat{s} \text{~largest } |\overbar{\beta}_j^0|\text{'s}$ \label{truncate:row}
		\STATE \hspace{0.10in} \textbf{For} $j \in \{1, \ldots, p\}$
		\STATE \hspace{0.24in} $\beta^{0}_j \leftarrow \ind{\{j \in \cIs^0\}} \cdot \overbar{\beta}^{0}_j$
		\STATE \hspace{0.10in} \textbf{end For}
		\STATE \hspace{0.10in} $\bbeta^{0} \leftarrow \bbeta^{0}/\|\bbeta^{0}\|_2$
		\FOR{$t = 1, 2, \ldots, T_{\max}$}
		\STATE $\bbeta^{t} \leftarrow \Mb\cdot\bbeta^{t-1}$
		\STATE $\cIs^t \leftarrow \text{the set of index } j{'s} \text{~with the top } \hat{s} \text{~largest } |\beta_j^t|\text{'s}$ \label{truncate:row1}
		\FOR{$j \in \{1, \ldots, p\}$}
		\STATE $\beta^{t}_j \leftarrow \ind{\{j \in \cIs^t\}} \cdot \beta^{t}_j$
		\ENDFOR
		\STATE $\bbeta^{t} \leftarrow \bbeta^{t}/\|\bbeta^{t}\|_2$
		\ENDFOR
		\OUTPUT $\bbeta^{T_{\max}}$
	\end{algorithmic}
\end{algorithm}

We discuss the specific details of Algorithm \ref{alg:high}. The initialization $\bbeta^0$ is obtained by solving the convex minimization problem in line 3 of Algorithm \ref{alg:high} and then conducting an eigenvalue decomposition. The convex minimization problem is a relaxation of the original sparse PCA problem, $\max_{\bbeta \in \SSS^{p-1} \cap \mathbb{B}_0(s,p)} \bbeta^\top \Mb \bbeta$ (see \cite{d2004direct} for details). In line 3, $\rho > 0$ is the regularization parameter and $\| \cdot \|_{1,1}$ denotes the sum of the absolute values of all entries. The convex optimization problem in line 3 can be easily solved by the 
{\sl alternating direction method of multipliers} (ADMM) algorithm (see \cite{boyd2011distributed, vu2013fantope} for details). Its minimizer is denoted by $\bPi^0 \in \RR^{p \times p}$. In line 4, we set $\overbar{\bbeta}^0$ to be the first leading eigenvector of $\bPi^0$, and further perform truncation (lines 5-8) and renormalization (line 9) steps to obtain the initialization $\bbeta^0$. After this, we iteratively perform power iteration (line 11 and line 16), together with a truncation step (lines 12-15) that enforces the sparsity of the eigenvector. 

The following theorem provides simultaneous statistical and computational characterizations of Algorithm \ref{alg:high}.  

\begin{theorem}\label{thm:high} 
Let 
\begin{align}\label{def:kappa}
\kappa := \bigl[4 (1 - \mu_0^2) + \phi(f)\bigr]\big/\bigl[4(1 - \mu_0^2) + 3 \phi(f)\bigr] < 1, 
\end{align}
and the minimum sample size be 
\begin{align}\label{def:nmin}
n_{\min} := C \cdot s^2 \log p \cdot \phi(f)^2 \cdot \min \bigl\{  \kappa (1-\kappa^{1/2})/2, \kappa/8 \bigr\} \big/ \left[  (1 - \mu_0^2) + \phi(f) \right]^2. 
\end{align}
Suppose $\rho \!=\! C \bigl[\phi(f) \!+\! (1 \!-\! \mu_0^2)\bigr]\sqrt{\log p/n}$ with a sufficiently large constant $C$, where $\phi(f)$ and $\mu_0$ are specified in \eqref{mu} and \eqref{eq:5}. Meanwhile, assume the sparsity parameter $\hat{s}$ in Algorithm \ref{alg:high} is set to be $\hat{s} \!=\! C''\max\left\{\left\lceil 1/{(\kappa^{-1/2} \!-\! 1)^2} \right\rceil,\! 1\right\} \!\cdot\! s^*$. For $n \geq n_{\min}$ with $n_{\min}$ defined in \eqref{def:nmin}, we have 
	\#\label{eq:w9132}
	\|\bbeta^{t} - \bbeta^*\|_2 \leq \underbrace{C \cdot \frac{\bigl[\phi(f) + (1 - \mu_0^2)\bigr]^{\frac{5}{2}} (1 - \mu_0^2)^{\frac{1}{2}}}{\phi(f)^3} \cdot \sqrt{\frac{s \log p}{n}}}_{\dr Statistical~Error} + \underbrace{\kappa^t \cdot \sqrt{\min \bigl\{   (1-\kappa^{1/2})/2, 1/8 \bigr\}}}_{\dr Optimization~Error}
	\#
	with high probability. Here $\kappa$ is defined in \eqref{def:kappa}.  
\end{theorem}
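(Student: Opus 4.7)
The plan is to structure the proof in three stages: (i) a concentration bound for $\Mb$ around its population counterpart $\EE(\Mb)$ given by Lemma~\ref{M_expect}; (ii) analysis of the Fantope-based convex initialization to establish a warm start $\bbeta^0$ with $\langle \bbeta^0, \bbeta^*\rangle^2 \geq 1 - \min\{(1-\kappa^{1/2})/2,\,1/8\}$; and (iii) a per-iteration contraction analysis for the truncated power step that yields the geometric optimization error and the statistical error in \eqref{eq:w9132}.

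First I would establish the probabilistic input. Since $(\Delta y_i)^2 \in \{0,4\}$ and $\Delta \x_i$ is standard Gaussian, each summand in \eqref{constructM} has sub-exponential entries. A Bernstein-type argument gives the entrywise bound $\|\Mb - \EE(\Mb)\|_{\max} \lesssim [\phi(f)+(1-\mu_0^2)]\sqrt{\log p / n}$ with probability at least $1-p^{-c}$. Combining with a net argument over sparse unit vectors then produces the sparse operator-norm bound $\sup_{|S|\leq s+2\hat s}\|(\Mb-\EE(\Mb))_{S,S}\|_{\mathrm{op}} \lesssim [\phi(f)+(1-\mu_0^2)]\sqrt{(s+\hat s)\log p / n}$. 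This is the only stochastic ingredient the remainder of the proof needs.

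Second, I would analyze the initialization. Lemma~\ref{M_expect} writes $\EE(\Mb)$ as $4\phi(f)\,\bbeta^*\bbeta^{*\top}$ plus a multiple of $\Ib_p$, so its top eigengap is $4\phi(f)$ and its leading projector is $\bbeta^*\bbeta^{*\top}$. The Fantope SDP analysis of \cite{vu2013fantope}, applied with $\rho$ as chosen, yields $\|\bPi^0 - \bbeta^*\bbeta^{*\top}\|_F \lesssim s\rho/\phi(f) \lesssim \phi(f)^{-1}[\phi(f)+(1-\mu_0^2)]\,s\sqrt{\log p / n}$. A Davis--Kahan step converts this into the same-order bound on $\|\overbar\bbeta^0 \overbar\bbeta^{0\top} - \bbeta^*\bbeta^{*\top}\|_F$; the calibration of $n_{\min}$ in \eqref{def:nmin} is exactly what forces this quantity below a small absolute constant. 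The hard thresholding and renormalization (lines~5--9) lose at most a factor determined by $\hat s/s$, and the choice $\hat s = C''\max\{\lceil(\kappa^{-1/2}-1)^{-2}\rceil,1\}\cdot s$ is tuned to absorb this loss and deliver the required lower bound on $\langle\bbeta^0,\bbeta^*\rangle^2$.

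Third, I would carry out the contraction argument. Writing $\Eb = \Mb - \EE(\Mb)$ and decomposing $\bbeta^{t-1} = \alpha_{t-1}\bbeta^* + \sqrt{1-\alpha_{t-1}^2}\,\bv_{t-1}$ with $\bv_{t-1}\perp\bbeta^*$, Lemma~\ref{M_expect} gives
\$
\Mb\bbeta^{t-1} = 4[\phi(f)+(1-\mu_0^2)]\alpha_{t-1}\bbeta^* + 4(1-\mu_0^2)\sqrt{1-\alpha_{t-1}^2}\,\bv_{t-1} + \Eb\bbeta^{t-1}.
\$
Without truncation, the ratio of the tangential to radial components contracts by $(1-\mu_0^2)/[\phi(f)+(1-\mu_0^2)]$ per step. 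The truncation to the top $\hat s$ coordinates amplifies the tangential component by at most $\sqrt{1+s/(\hat s - s)}$ via the standard truncated power argument of \cite{yuan2011truncated}, and the choice of $\hat s$ is exactly calibrated so that the combined multiplicative factor equals the $\kappa$ of \eqref{def:kappa}. The perturbation term $\Eb\bbeta^{t-1}$, after the truncation step, lives on a support of size at most $s+\hat s$, so the sparse operator-norm bound from step one controls its norm by $\lesssim[\phi(f)+(1-\mu_0^2)]\sqrt{s\log p / n}$. Dividing by the signal strength $\phi(f)$ and summing the geometric series with ratio $\kappa$ yields the statistical error in \eqref{eq:w9132}, while the transient $\kappa^t$ arises from propagating the initial angle guarantee from step two.

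The main obstacle is the bookkeeping in step three: matching the stated exponent $5/2$ in the statistical-error numerator and ensuring the per-step contraction constant is precisely $\kappa$, rather than some looser ratio, requires simultaneously tracking three effects: the truncation-induced amplification of the tangential component, the noise propagation normalized by $1/\phi(f)$, and the sin-theta-to-$\ell_2$ conversion that arises when turning an angle bound into an $\ell_2$ error. Keeping these factors separated rather than coarsely upper-bounded is what delivers the sharp constants, not merely the correct $\sqrt{s\log p/n}$ rate.
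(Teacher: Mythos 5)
Your proposal follows essentially the same two-stage route as the paper: a Fantope-SDP warm start analyzed via the results of \cite{vu2013fantope} together with a truncation-loss bound to certify the initialization condition, followed by the truncated-power contraction with ratio $\kappa$ and the restricted operator-norm concentration of $\Mb$ supplying the statistical error. The only difference is one of packaging — the paper invokes Theorem 1 of \cite{yuan2011truncated} and the concentration/initialization bounds as black-box citations, whereas you sketch those ingredients explicitly — so the substance and the decomposition into initialization plus contraction are the same.
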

The first term on the right-hand side of \eqref{eq:w9132} is the statistical error while the second term gives the optimization error. Note that the optimization error decays at a geometric rate since $\kappa < 1$. For $ T_{\max}$ sufficiently large, we have \$\bigl\|\bbeta^{T_{\max}}  - \bbeta^*\bigr\|_2  \lesssim \sqrt{s \log p/n}.\$
In the sequel, we show that the right-hand side gives the optimal statistical rate of convergence for a broad model class under the high dimensional setting with $p \gg n$. 

\subsection{Minimax lower bound}
We establish the minimax lower bound for estimating $\bbeta^*$ in the model defined in \eqref{model}. In the sequel we define the family of link functions that are Lipschitz continuous and are bounded away from $\pm 1$. Formally, for any $m \in (0,1)$ and $L > 0$, we define
\#\label{eq:w0109}
\cF(m,L) := \bigl\{f:   |f(z)| \leq 1 - m,  \quad |f(z) - f(z')| \leq L|z - z'|, \quad \text{for~all~}  z,z' \in \RR \bigr\}.
\#
Let $\mathcal{X}_{f}^n \!:=\! \{(y_i,\x_i)\}_{i=1}^{n}$ be the $n$ i.i.d. realizations of $(Y,\bX)$, where $\bX$ follows $\cN(\zero, \Ib_p)$ and $Y$ satisfies \eqref{model} with link function $f$. Correspondingly, we denote the estimator of $\bbeta^* \in \cB$ to be $\widehat{\bbeta}(\mathcal{X}_{f}^n)$, where $\cB$ is the domain of $\bbeta^*$. We define the minimax risk for estimating $\bbeta^*$ as 
\#
\label{minimax}
\mathcal{R}(n,m,L,\cB) := \inf_{f \in \cF(m,L)} \inf_{\widehat{\bbeta}(\mathcal{X}_{f}^n)} \sup_{\bbeta^* \in \cB} \EE \bigl\|\widehat{\bbeta}(\mathcal{X}_{f}^n) - \bbeta^* \bigr\|_2. 
\#
In the above definition, we not only take the infimum over all possible estimators $\hat{\bbeta}$, but also all possible link functions in $\cF(m,L)$. For a fixed $f$, our formulation recovers the standard definition of minimax risk \citep{yu1997assouad}. By taking the infimum over all link functions, our formulation characterizes the minimax lower bound under the least challenging $f$ in $\cF(m,L)$. In the sequel we prove that our procedure attains such a minimax lower bound for the least challenging $f$ given any unknown link function in $\cF(m, L)$. That is to say, even when $f$ is unknown, our estimation procedure is as accurate as in the setting where we are provided the least challenging $f$, and the achieved accuracy is not improvable due to the information-theoretic limit. The following theorem establishes the minimax lower bound in the high dimensional setting. Recall that $\B_0(s,p)$ defined in \eqref{eq:w9182} is the set of $s$-sparse vectors in $\RR^{p}$.

\begin{theorem} 
	\label{thm:minimax_high} 
	Let $\cB \!=\! \SSS^{p-1}\cap\B_0(s,p)$. We assume that $n \!>\! {m(1\!-\!m)}/{(2L^2)^2}\!\cdot\!\bigl[Cs\log(p/s)/2 \!-\! \log 2\bigr]$. For any $s \in (0, p/4]$, the minimax risk defined in \eqref{minimax} satisfies 
	\$
	\mathcal{R}(n,m,L,\cB) \geq C'\cdot \frac{\sqrt{m(1-m)}}{L}\cdot \sqrt{\frac{s\log(p/s)}{n}}.
	\$ 
	Here $C$ and $C'$ are absolute constants, while $m$ and $L$ are defined in \eqref{eq:w0109}.
\end{theorem}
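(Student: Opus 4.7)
The plan is a classical Fano's--method reduction to multiple hypothesis testing, carried out over a Gilbert--Varshamov packing of the sparse sphere. One structural observation is peculiar to our setting, dictated by the definition of $\cR$ in \eqref{minimax}: the minimax risk takes an \emph{infimum} over $f \in \cF(m,L)$, so the lower bound must hold for every admissible link function simultaneously. I will therefore fix an arbitrary $f \in \cF(m,L)$ and bound all KL divergences using only the Lipschitz constant $L$ and the margin $m$---never appealing to any particular structure of $f$---so that the resulting bound is automatically valid after taking infimum.

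The first step is to construct a packing $\{\bbeta^{(1)},\ldots,\bbeta^{(M)}\}\subset \SSS^{p-1}\cap\B_0(s,p)$ with pairwise $\ell_2$ separation $\|\bbeta^{(j)}-\bbeta^{(k)}\|_2 \in [c_1\delta, 2\delta]$ for $j\neq k$, and log-cardinality $\log M \geq c_0 \cdot s\log(p/s)$. The standard construction is two-level: first select an $s$-element support of $[p]$, using $\log \binom{p}{s} \gtrsim s\log(p/s)$ (where the hypothesis $s \leq p/4$ gives a clean binomial bound); then within each fixed support, apply Varshamov--Gilbert to pack sign patterns in $\{\pm 1/\sqrt{s}\}^s$, translating Hamming distance between sign vectors into $\ell_2$ distance between unit-norm vectors.

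Next, bound the KL divergence. Writing $p_\bbeta(\xb) := [1 + f(\la \xb,\bbeta\ra)]/2$, each $Y_i$ given $\bX_i$ is $\mathrm{Ber}(p_\bbeta(\bX_i))$. Combining the $\chi^2$-type inequality $D_{\rm KL}(\mathrm{Ber}(p)\|\mathrm{Ber}(q)) \leq (p-q)^2/[q(1-q)]$ with the two constraints defining $\cF(m,L)$---Lipschitz, yielding $|p_\bbeta-p_{\bbeta'}| \leq (L/2)|\la \xb,\bbeta-\bbeta'\ra|$, and margin, yielding $q(1-q) \geq m(1-m)/4$ since $q \in [m/2, 1-m/2]$---then integrating $\EE[\la\bX,\bbeta-\bbeta'\ra^2] = \|\bbeta-\bbeta'\|_2^2$ against $\bX\sim\cN(\zero,\Ib_p)$ and tensorising over $n$ samples yields
\$
D_{\rm KL}\bigl(P_{\bbeta^{(j)}}^{\otimes n}\,\big\|\,P_{\bbeta^{(k)}}^{\otimes n}\bigr) \leq \frac{n L^2 \|\bbeta^{(j)}-\bbeta^{(k)}\|_2^2}{m(1-m)} \leq \frac{4 n L^2 \delta^2}{m(1-m)}.
\$
Fano's inequality then gives $\inf_{\widehat\bbeta}\max_j \PP(\|\widehat\bbeta - \bbeta^{(j)}\|_2 \geq c_1\delta/2) \geq 1 - (\max_{j,k}D_{\rm KL} + \log 2)/\log M$. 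Choosing $\delta^2 \asymp m(1-m)\,s\log(p/s)/(nL^2)$ makes the KL term at most $(\log M)/4$, pushing the right-hand side above a positive constant; Markov's inequality converts the resulting testing bound into $\cR \gtrsim \sqrt{m(1-m)}/L \cdot \sqrt{s\log(p/s)/n}$, matching the claim. The quantitative hypothesis on $n$ in the theorem is precisely what forces $\delta$ small enough for $\log M$ to dominate the additive $\log 2$ slack in Fano's inequality (and ensures the packing radius is consistent with living on $\SSS^{p-1}$).

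The main obstacle is producing the packing with both the sharp log-cardinality $\log M \gtrsim s\log(p/s)$ (not merely $s\log p$) and controlled $\ell_2$ separation: a one-level Varshamov--Gilbert argument loses the crucial $\log(p/s)$ factor, so the support-then-sign two-level construction above is essential and is the only nonroutine component. Everything downstream is a standard calculation with the Bernoulli $\chi^2$ bound and Lipschitz tools; in particular, the uniformity in $f\in \cF(m,L)$ required by the infimum in \eqref{minimax} is automatic because those tools invoke only the defining parameters $m$ and $L$ of the class, not any specific link.
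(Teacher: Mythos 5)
Your overall strategy is the same as the paper's: a Fano reduction over a packing of $\SSS^{p-1}\cap\B_0(s,p)$, a $\chi^2$-type bound on the Bernoulli KL divergence controlled only by the Lipschitz constant $L$ and the margin $m$, tensorization over the $n$ Gaussian samples, and a choice of packing radius $\delta^2 \asymp m(1-m)\,s\log(p/s)/(nL^2)$. Your treatment of the infimum over $f$ (fix an arbitrary $f$ and observe that the KL bound depends only on $m$ and $L$) is if anything cleaner than the paper's device of assuming the infimum is attained at some $f^*$; your KL computation matches \eqref{KL1}--\eqref{KL3}.

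The one genuine gap is in the packing, which you yourself single out as the only nonroutine component. As described, your hypotheses are unit vectors supported on $s$ coordinates with all nonzero entries in $\{\pm 1/\sqrt{s}\}$. After the Varshamov--Gilbert step any two distinct hypotheses differ in $\gtrsim s$ positions, each contributing $\asymp 1/s$ to the squared distance, so all pairwise distances are $\Theta(1)$: they cannot be placed in $[c_1\delta, 2\delta]$ for the small, $n$-dependent $\delta$ that your final step requires. With a constant-separation packing, Fano yields a nontrivial bound only when $n \lesssim m(1-m)\,s\log(p/s)/L^2$, not the claimed $\sqrt{s\log(p/s)/n}$ rate for all $n$ above the stated threshold. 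The missing ingredient is the anchored embedding the paper uses in \eqref{packingset}: take $\bbeta = \bigl(\sqrt{1-\delta^2},\, (\delta/\sqrt{s-1})\,\bw\bigr)$ with $\bw$ ranging over a Hamming packing of $(s-1)$-sparse binary vectors (Lemma \ref{thm:finiteset}). The common first coordinate keeps every hypothesis on $\SSS^{p-1}$ and $s$-sparse while rescaling all pairwise distances by $\delta$, so that both the lower separation bound (needed for the testing reduction) and the upper bound $\lesssim \delta$ (needed to control the KL diameter) hold simultaneously, with $\log M \gtrsim s\log(p/s)$. This one-level Hamming packing also sidesteps the cross-support separation issue your two-level support-then-sign construction leaves unaddressed. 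Once the packing is repaired in this way, the rest of your argument goes through and coincides with the paper's proof.
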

\begin{proof}
	See \S \ref{proof:thm:minimax_high} for a detailed proof.	
\end{proof}

Theorem \ref{thm:minimax_high} establishes the minimax optimality of the statistical rate attained by our procedure for $p \!\gg\! n$ and $s$-sparse $\bbeta^*$. In particular, for arbitrary $f \in \cF(m,L) \cap \{f: \phi(f) > 0\}$, the estimator $\hat{\bbeta}$ attained by Algorithm \ref{alg:high} is minimax-optimal in the sense that its $\sqrt{s\log p/n}$ rate of convergence is not improvable, even when the information on the link function $f$ is available. The next corollary of Theorem \ref{thm:minimax_high} establishes the minimax lower bound for $p \ll n$. 
\begin{corollary}
	\label{thm:minimax_low} 
	Let $\cB = \SSS^{p-1}$. We suppose that $n > {m(1-m)}/(2L^2)\cdot (Cp - \log 2)$. The minimax risk defined in \eqref{minimax} satisfies
	\$
	\mathcal{R}(n,m,L,\cB) \geq C'\cdot \frac{\sqrt{m(1-m)}}{L}\cdot \sqrt{\frac{p}{n}},
	\$ 
	where $C$ and $C'$ are some absolute constants, while $m$ and $L$ are defined in \eqref{eq:w0109}.
\end{corollary}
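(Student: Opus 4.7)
The plan is to derive the corollary directly from Theorem \ref{thm:minimax_high} by using a sparsity level that is a constant fraction of $p$. The key observation is that enlarging the parameter space can only enlarge the minimax risk, and $\SSS^{p-1} = \SSS^{p-1} \cap \B_0(p,p) \supseteq \SSS^{p-1} \cap \B_0(\lfloor p/4 \rfloor, p)$. Hence
$$
\mathcal{R}(n,m,L,\SSS^{p-1}) \;\geq\; \mathcal{R}\bigl(n,m,L,\SSS^{p-1} \cap \B_0(\lfloor p/4 \rfloor, p)\bigr),
$$
and we can bound the right-hand side by invoking Theorem \ref{thm:minimax_high} with sparsity $s=\lfloor p/4\rfloor$, which satisfies the admissibility condition $s \in (0, p/4]$.

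Next, I would check that the sample-size hypothesis transfers correctly. With $s=\lfloor p/4\rfloor$, we have $\log(p/s) \in [\log 4,\, \log 4 + O(1/p)]$, so $s\log(p/s) = \Theta(p)$. The hypothesis of Theorem \ref{thm:minimax_high} becomes a lower bound of the form $n \gtrsim m(1-m)L^{-4}\cdot p$ (up to universal constants), which is guaranteed by the corollary's assumption $n > m(1-m)/(2L^2)\cdot(Cp-\log 2)$ after absorbing universal constants into $C$. Substituting into the conclusion of Theorem \ref{thm:minimax_high} yields
$$
\mathcal{R}(n,m,L,\SSS^{p-1}) \;\geq\; C'\cdot\frac{\sqrt{m(1-m)}}{L}\cdot \sqrt{\frac{\lfloor p/4\rfloor\, \log 4}{n}} \;\geq\; C''\cdot\frac{\sqrt{m(1-m)}}{L}\cdot\sqrt{\frac{p}{n}},
$$
which is the claimed bound after relabeling the absolute constant.

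The only genuine bookkeeping issue---and the part most likely to require care---is making sure the sample-size threshold stated in the corollary is at least as strong as what the reduction demands; this is a routine matter of absorbing $\log 4$, the floor, and the $L$-dependence into the universal constants. No new probabilistic or information-theoretic argument is needed, since Fano's inequality, the construction of a Varshamov--Gilbert-type packing of $\SSS^{p-1}\cap \B_0(\lfloor p/4\rfloor, p)$, and the KL bound via the Lipschitz property of $f\in\cF(m,L)$ are already carried out inside the proof of Theorem \ref{thm:minimax_high}. Alternatively, one could prove the corollary \emph{de novo} by replacing the sparse packing with a standard $\delta$-packing of $\SSS^{p-1}$ of cardinality $\geq e^{cp}$, bounding $\mathrm{KL}(P_{\bbeta_i}^{\otimes n}\,\|\,P_{\bbeta_j}^{\otimes n}) \lesssim nL^2/m(1-m)\cdot\|\bbeta_i-\bbeta_j\|_2^2$ via the uniform bound $|f|\leq 1-m$ and the Lipschitz constant $L$, and choosing $\delta \asymp L^{-1}\sqrt{m(1-m)\,p/n}$ so that Fano's inequality applies; but the reduction to Theorem \ref{thm:minimax_high} is the most economical route.
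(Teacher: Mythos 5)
Your proposal is correct and is essentially identical to the paper's own (one-line) proof, which simply invokes Theorem \ref{thm:minimax_high} with $s = p/4$; your additional care with the floor function, the monotonicity of the minimax risk under enlarging $\cB$, and the transfer of the sample-size condition only makes explicit what the paper leaves implicit.
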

\begin{proof}
	The result follows from Theorem \ref{thm:minimax_high} by setting $s = p/4$.
\end{proof}

It is worth to note that our lower bound becomes trivial for $m = 0$, i.e., there exists some $z$ such that $|f(z)| = 1$. One example is the noiseless one-bit compressed sensing model defined in \eqref{one-bit CS}, for which we have $f(z) = \sign(z)$. In fact, for noiseless one-bit compressed sensing, the $\sqrt{s \log p/n}$ rate is not optimal. For example, \citet{jacques2011robust} (Theorem 2) provide a computationally inefficient algorithm that achieves rate ${s\log p/n}$. Understanding such a rate transition phenomenon for link functions with zero margin, i.e., $m=0$ in \eqref{eq:w0109}, is an interesting future direction. 


\section{Numerical results}

In this section, we provide the numerical results to support our theory. We conduct two sets of experiments. First, we examine the eigenstructures of the second moment estimators defined in \eqref{constructM} and \eqref{M_alter} for the following three models: flipped logistic regression (FLR) in \eqref{model: logitRegression}, one-bit compressed sensing with Gaussian noise $\cN(0,\sigma^2)$ (one-bit CS) in \eqref{robust one-bit CS} and one-bit phase retrieval (one-bit PR) in \eqref{eq:w02}. Second, for the same three models, we apply Algorithm \ref{alg:low} and Algorithm \ref{alg:high} to parameter estimation in the low dimensional and high dimensional regimes, respectively. Our simulations are based on synthetic data. Given $n, p$, model parameter $\bbeta^*$, and some specific model, we construct our data as follows. We first generate $n$ i.i.d. samples $\xb_1,\ldots,\x_n$ from $\cN(0,\Ib_p)$. Then for each sample $\x_i$, we generate the corresponding label $y_i$ by plugging $\big\langle \x_i, \bbeta^*\big\rangle$ into the specified model. 
\begin{figure}[ht]
	\centering
	\subfigure[Flipped Logistic Regression]{
		\centering
		\label{fig:spec_LR}
		\includegraphics[width=0.31\columnwidth]{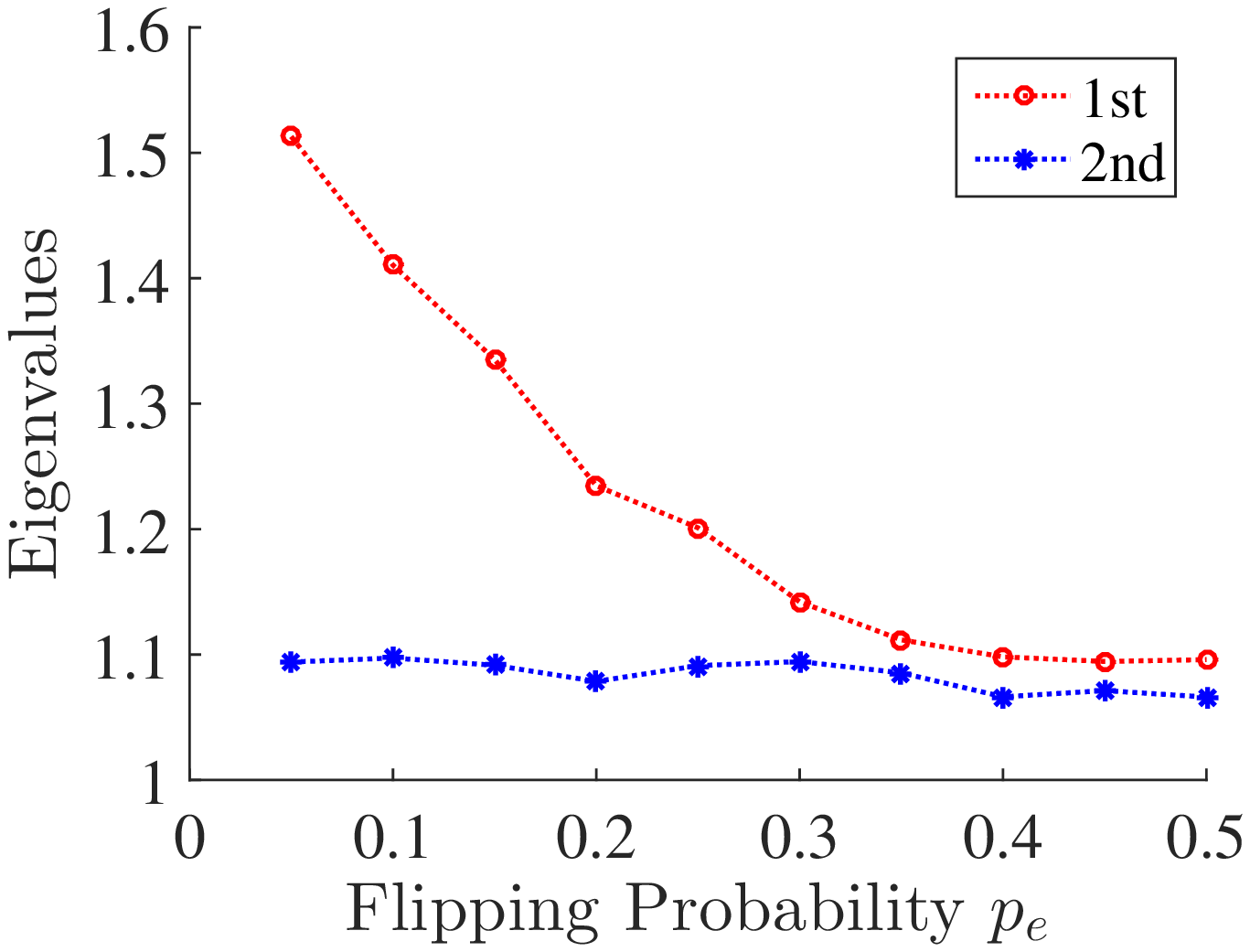}
	}
	\hfill
	\subfigure[One-bit Compressed Sensing]{
		\centering
		\label{fig:spec_CS}
		\includegraphics[width=0.31\columnwidth]{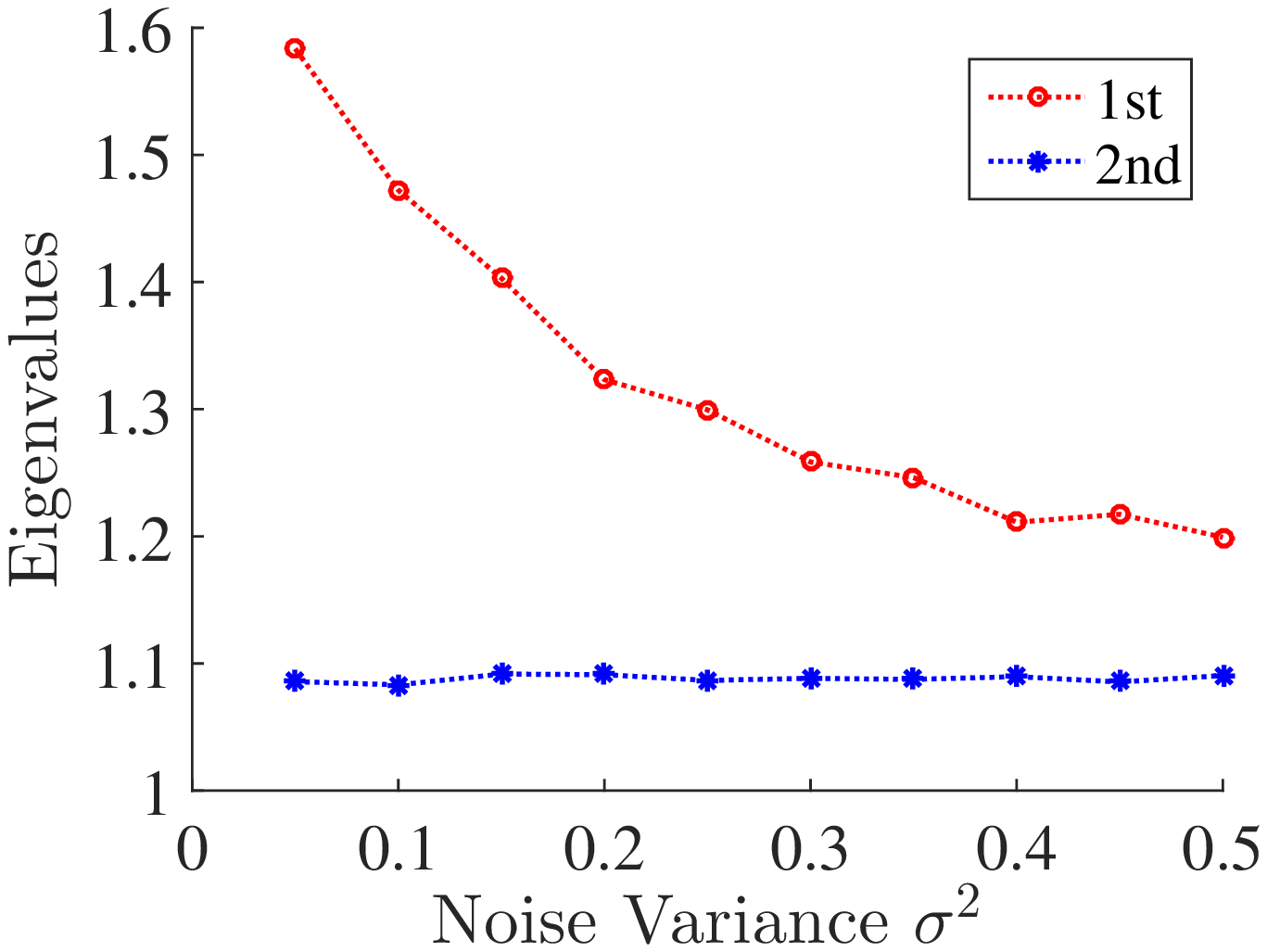}
	}
	\hfill
	\subfigure[One-bit Phase Retrieval]{
		\centering
		\label{fig:spec_PR}
		\includegraphics[width=0.31\columnwidth]{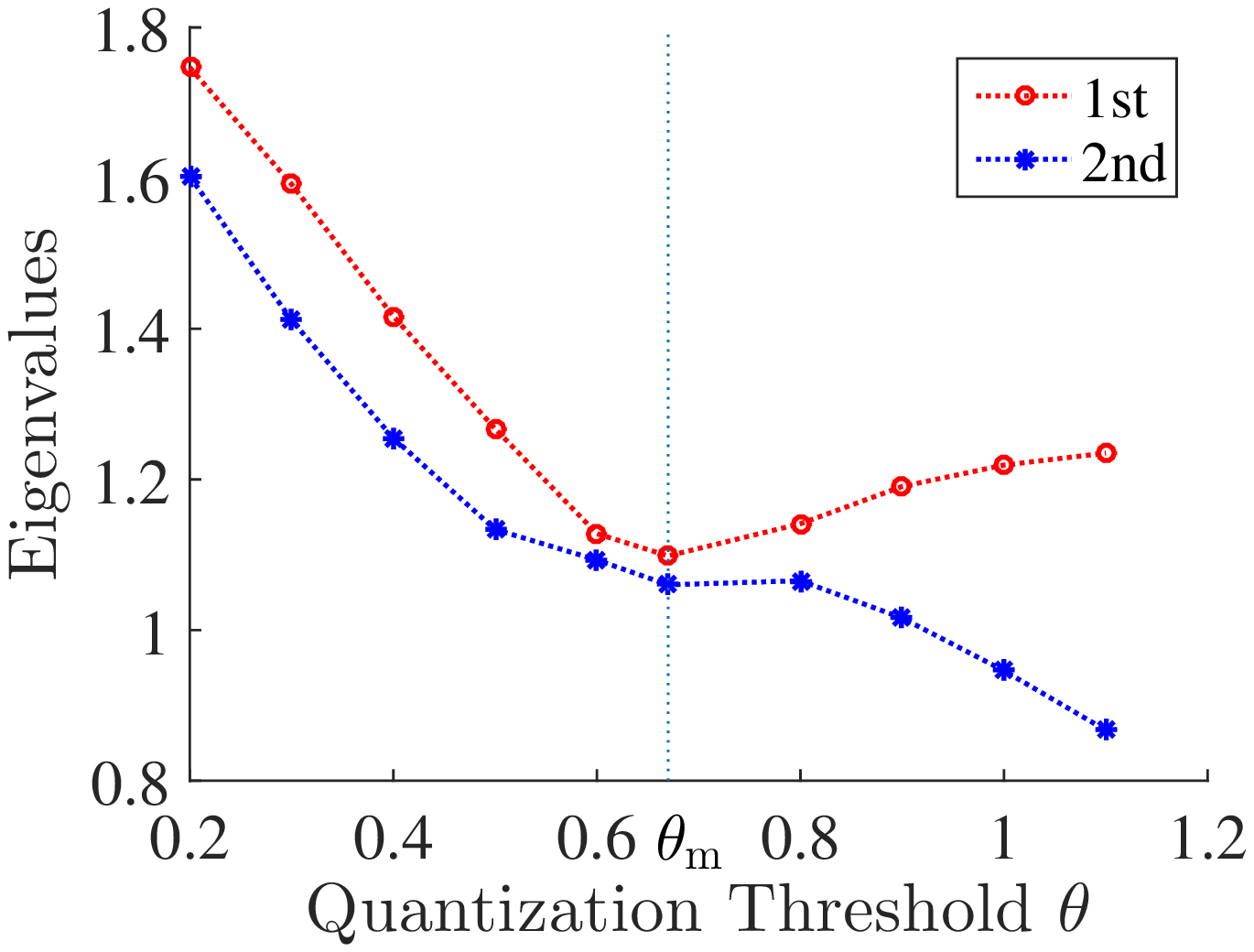}
	}
	\caption{Eigenstructure of the second moment estimators defined in \eqref{constructM} and \eqref{M_alter}. Panel (a) and (b) show the top two eigenvalues of $\Mb/4$ for FLR and 1-bit CS respectively. Panel (c) shows the top two eigenvalues of $\Mb/4$ when $\theta > \theta_{\rm m}$ and the first two eigenvalues of $\Mb'/4$ when $\theta < \theta_{\rm m}$ for 1-bit PR. }
	\label{fig:eigenstructure}
\end{figure}
For the first set of experiments, we set $n = 3000, p = 20$. We randomly select $\bbeta^*$ from $\SSS^{p-1}$. Figure \ref{fig:eigenstructure} shows the top two eigenvalues of the second moment estimator constructed from $n$ samples. Each curve is an average of $10$ independent trials.
In the first two models (FLR and 1-bit CS), as predicted by Lemmas \ref{M_expect} and \ref{lem:example models}, we observe that the gap between first two eigenvalues, corresponding to $\phi(f)$, decays with noise parameter $p_e$ and $\sigma^2$. Note that the two models have symmetric link functions, thereby we obtain $\mu_0 = 0$ and further $\mathbb{E}(\mathbb{\Mb}/4) = \phi(f)\cdot\bbeta^*\bbeta^{*\top} + \Ib_p$. This theoretical conclusion leads to the practical phenomenon that the second eigenvalue in Panel \ref{fig:spec_LR} and \ref{fig:spec_CS} stays close to $1$ and does not change with noise level. For 1-bit PR, when quantization threshold $\theta < \theta_{\rm m}$, particularly we have $\phi(f) < 0$. In this case, as claimed in Corollary \ref{cor:M_expect}, we can construct second moment estimator $\Mb'$ whose expectation has top eigenvector $\bbeta^*$ and positive eigen gap $-\phi(f)$. Panel \ref{fig:spec_PR} shows the existence of nontrivial eigen gap of $\Mb'$ in the region $\theta < \theta_{\rm m}$ thus validates our theory.
\begin{figure}[ht]
	\centering
	\subfigure[Flipped Logistic Regression]{
		\centering
		\label{fig:low_LR}
		\includegraphics[width=0.31\columnwidth]{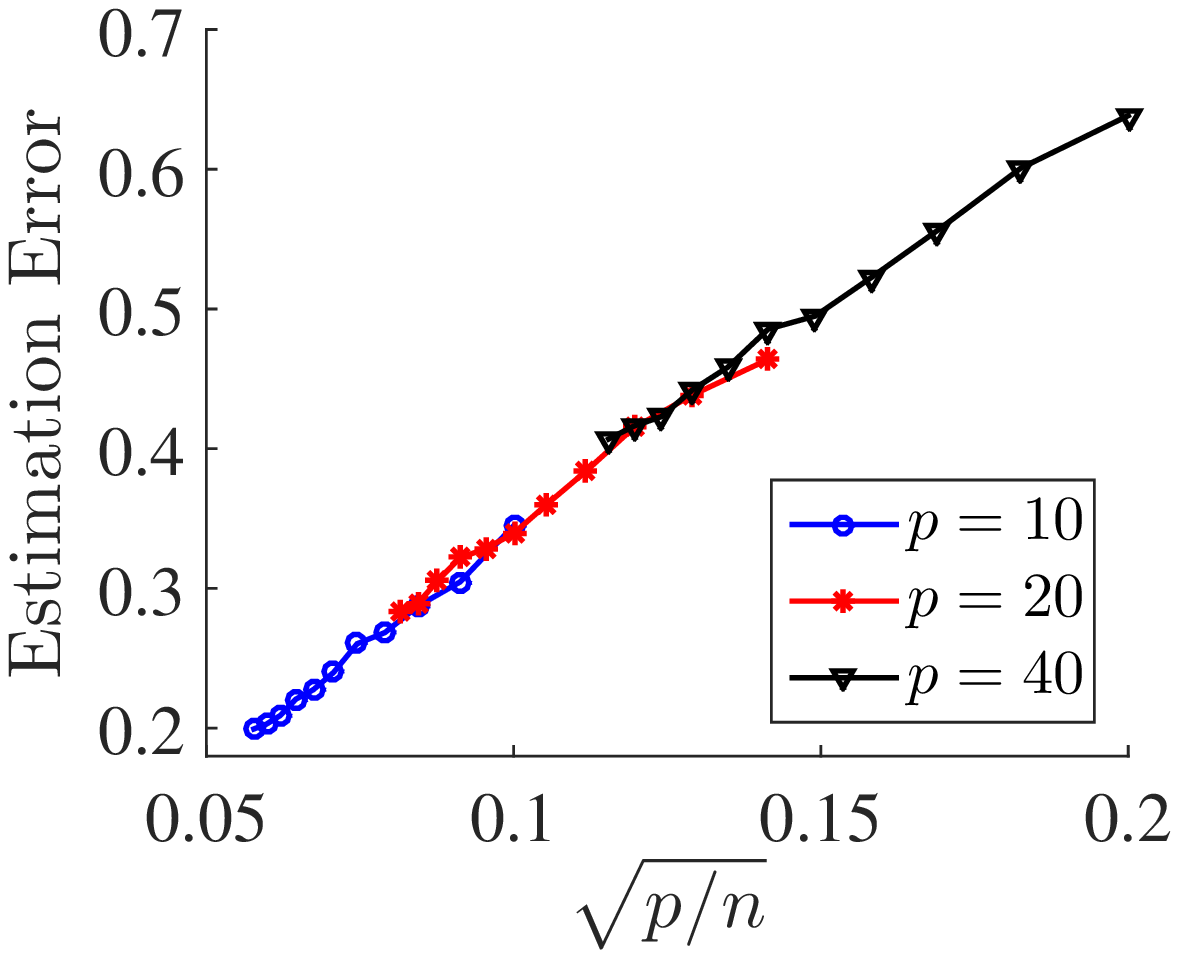}
	}
	\hfill
	\subfigure[One-bit Compressed Sensing]{
		\centering
		\label{fig:low_CS}
		\includegraphics[width=0.31\columnwidth]{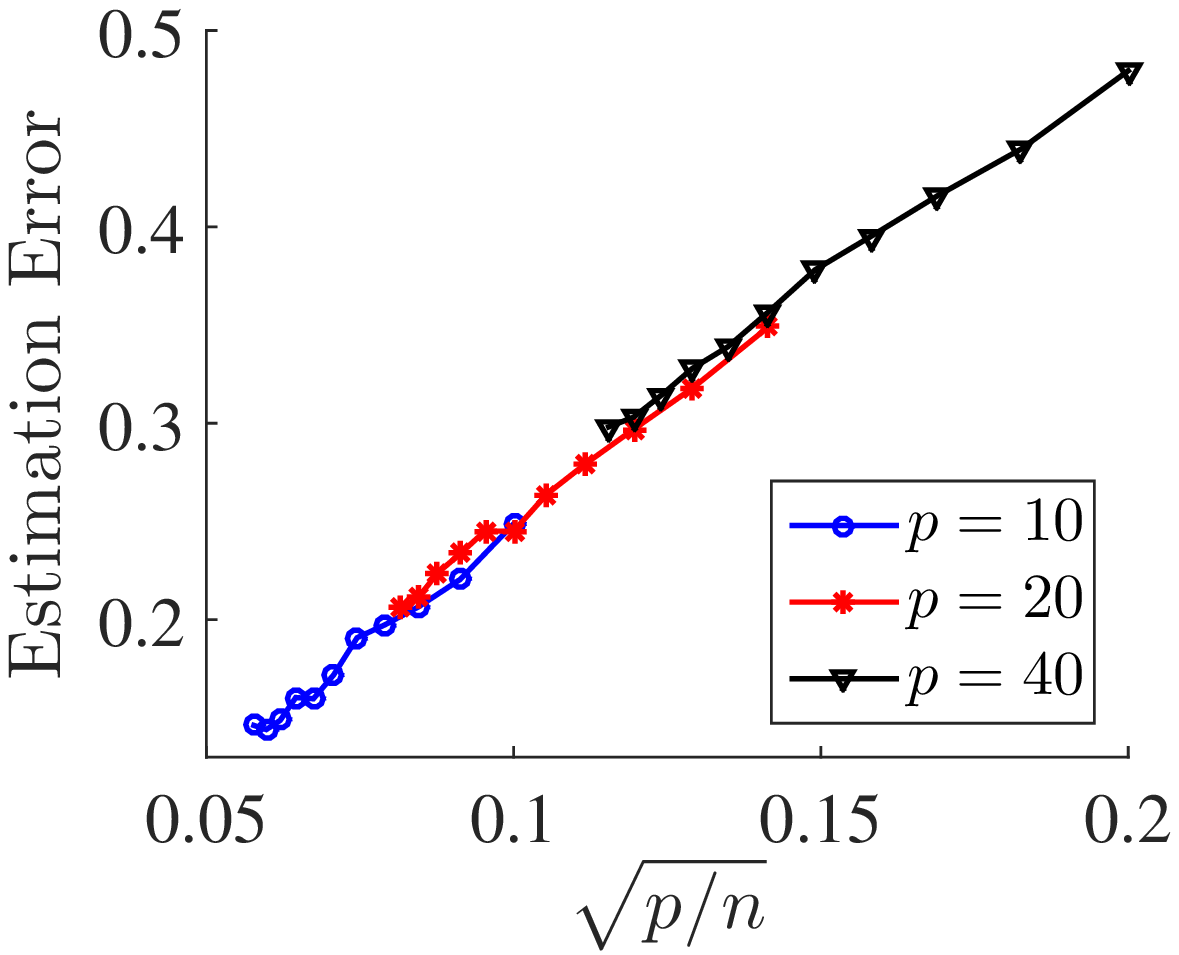}
	}
	\hfill
	\subfigure[One-bit Phase Retrieval]{
		\centering
		\label{fig:low_PR}
		\includegraphics[width=0.31\columnwidth]{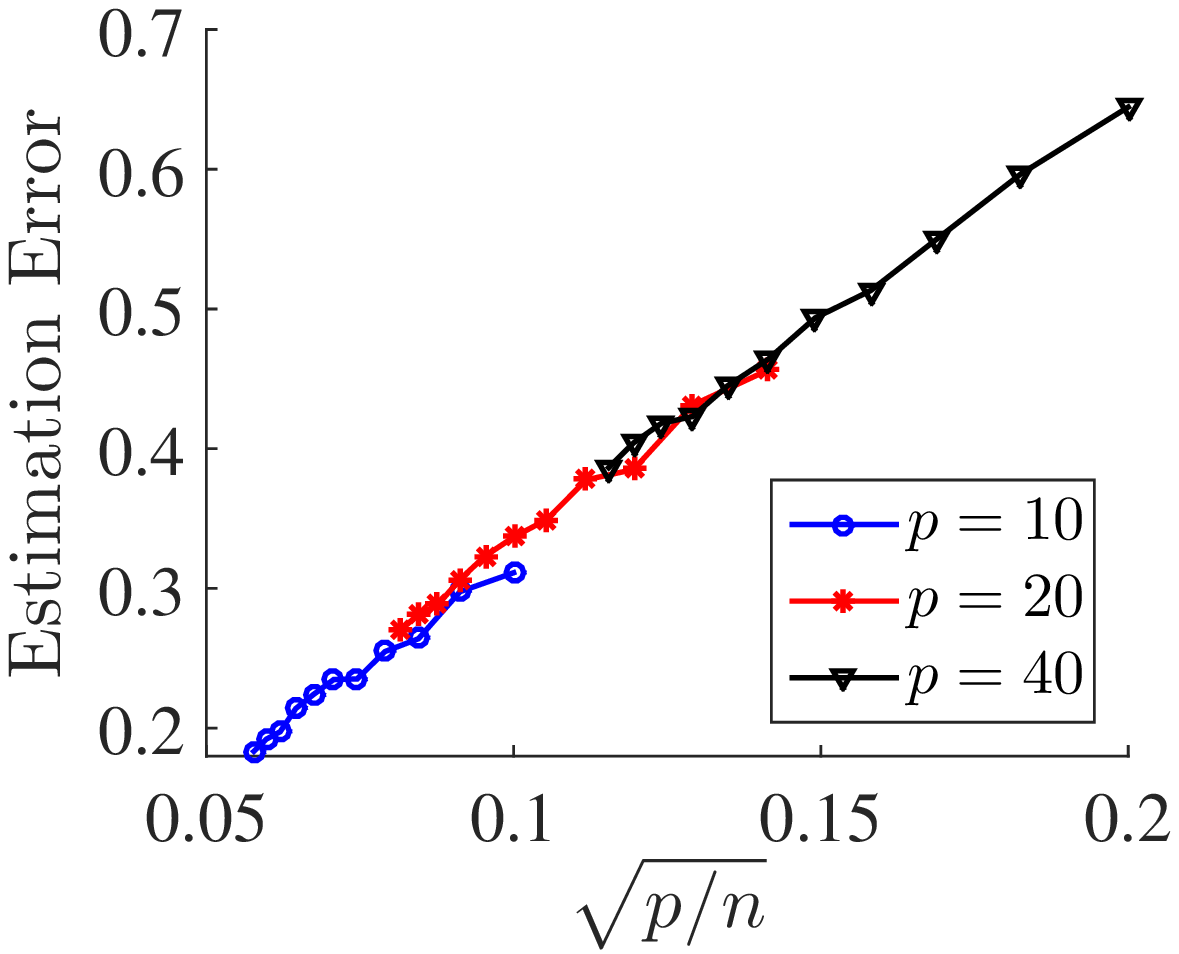}
	}
	\caption{Estimation error of low dimensional recovery in three models.  (a) For FLR, we set flipping probability $p_e = 0.1$. (b) For 1-bit CS, we set variance of Gaussian noise $\delta^2 = 0.1$. (c) For 1-bit PR, we set quantization threshold $\theta = 1$.}
	\label{fig:low}
\end{figure}

\begin{figure}[ht]
	\centering
	\subfigure[Flipped Logistic Regression]{
		\centering
		\label{fig:high_LR}
		\includegraphics[width=0.31\columnwidth]{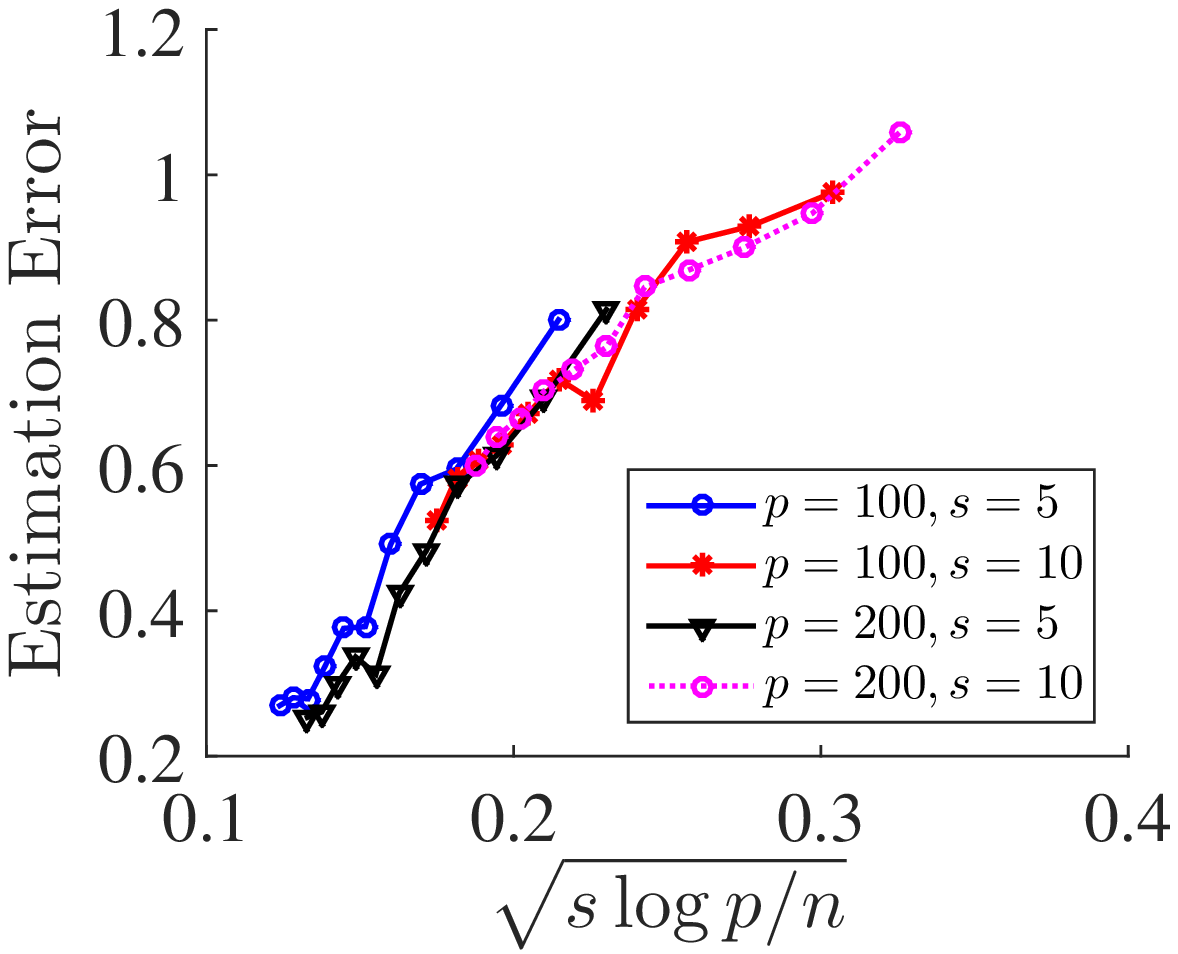}
	}
	\hfill
	\subfigure[One-bit Compressed Sensing]{
		\centering
		\label{fig:high_CS}
		\includegraphics[width=0.31\columnwidth]{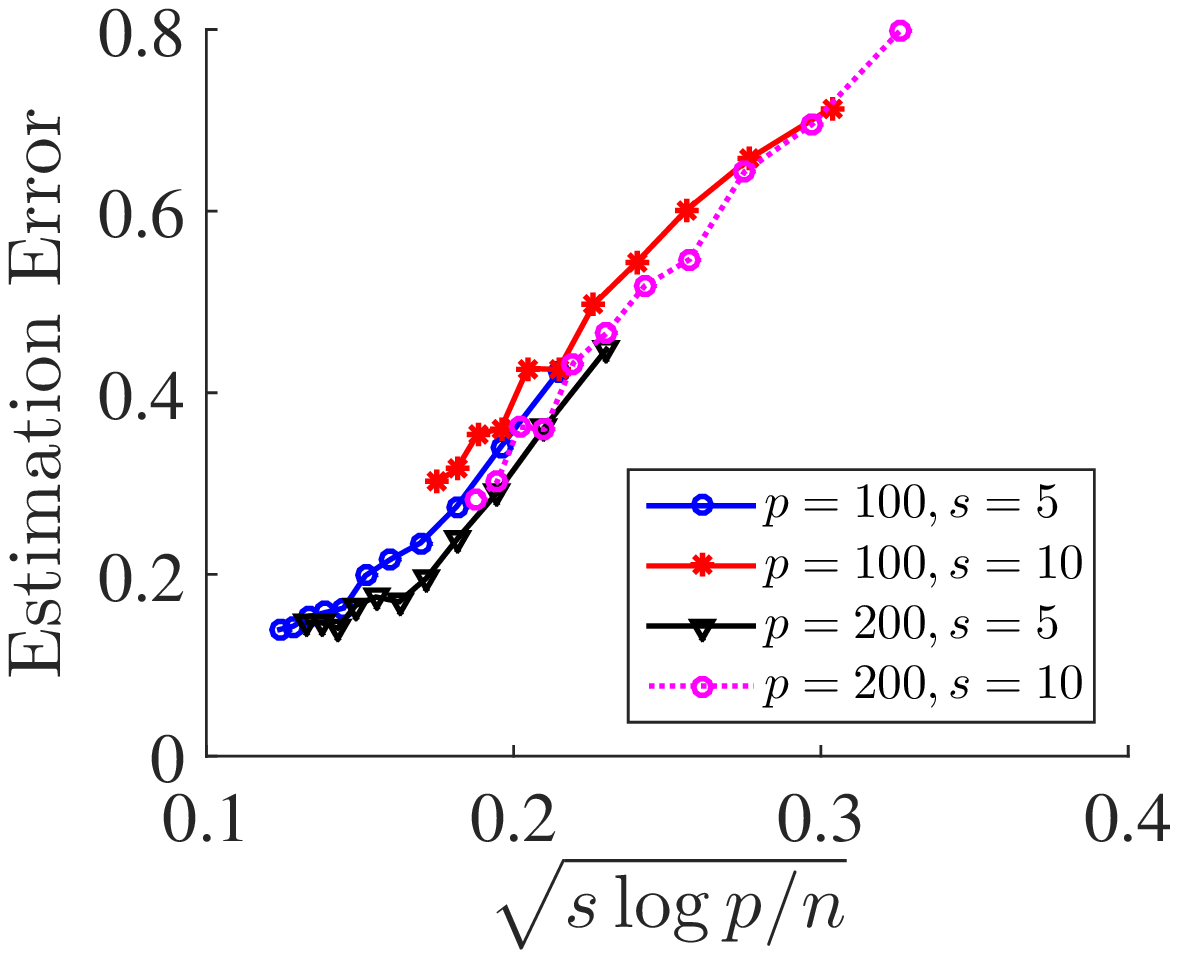}
	}
	\hfill
	\subfigure[One-bit Phase Retrieval]{
		\centering
		\label{fig:high_PR}
		\includegraphics[width=0.31\columnwidth]{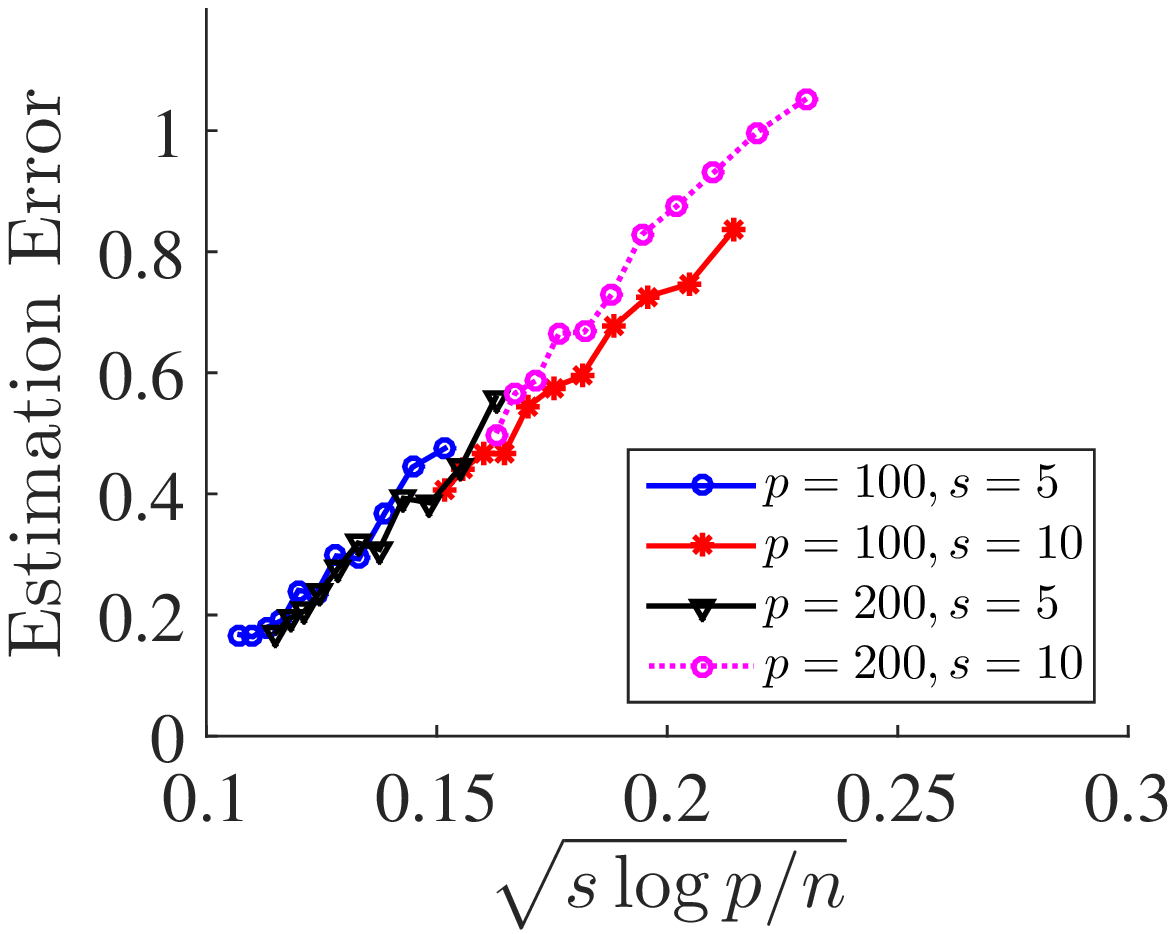}
	}
	\caption{Estimation error of sparse recovery in three models. (a) For FLR, we set flipping probability $p_e = 0.1$. (b) For 1-bit CS, we set variance of Gaussian noise $\delta^2 = 0.1$. (c) For 1-bit PR, we set quantization threshold $\theta = 1$.}
	\label{fig:high}
\end{figure}
In the second set of experiments, we fix $p_e = 0.1, \sigma^2 = 0.1, \theta = 1$ for the three models. For low dimensional recovery, we randomly select $\bbeta^*$ from $\SSS^{p-1}$. For high dimensional recovery, we generate $\bbeta^*$ as follows. Given sparsity $s$,  we first randomly select a subset $\mathcal{S}$ of $\{1,\ldots,p\}$ with size $s$ as support of $\bbeta^*$. We then set $\bbeta^*_{\mathcal{S}}$ to be a vector that is randomly generated from $\SSS^{s-1}$. We characterize the estimation error by $\ell_2$ norm. Figure \ref{fig:low} plots the estimation error against the quantity $\sqrt{p/n}$ in the low dimensional regime. Each curve is an average of $100$ independent trials. We note that for the same value of  $\sqrt{p/n}$, we obtain almost the same estimation error in practice. Moreover, we observe that the estimation error has a linear dependence on $\sqrt{p/n}$. These two empirical results correspond to our theoretical conclusion in Theorem \ref{thm:low}. Figure \ref{fig:high} plots estimation error against $\sqrt{s\log p/n}$ for recovering $s$-sparse $\bbeta^*$ with different values of $s$ and $p$. Each curve is an average of $100$ independent trials. Similar to low dimensional recovery, we observe that the estimation error is nearly proportional to $\sqrt{s\log p/n}$ and the same $\sqrt{s\log p/n}$ leads to approximately identical estimation error. This phenomenon validates Theorem \ref{thm:high}.

\section{Proofs}
\label{proof}
In this section, we provide the proofs for our main results. First we characterize the implications of our general framework for the models in Section \ref{sec:example_models}. We then establish the statistical convergence rates of the proposed procedure and the corresponding minimax lower bounds. 
\subsection{Proof of Lemma \ref{lem:example models}}
\label{proof:lem:example models}
\vspace{4pt}
\noindent{\bf Flipped logistic regression.} For flipped logistic regression, the link function $f$ is defined in \eqref{model: logitRegression}, where $\zeta$ is the intercept. For $\zeta = 0$, we have 
\$
f(z) = \frac{e^z-1}{e^z+1} + 2p_{\rm e} \cdot \frac{1 - e^z}{1+e^z}.
\$
Note that $f$ is odd. Hence, by \eqref{mu} we have $\mu_0 = \mu_2 = 0$. Meanwhile, from Stein's lemma, we have 
\$
\mu_1 = \EE[f'(z)] = \EE\left[ (1 - 2p_{\rm e})\cdot \frac{2e^z}{(1+e^z)^2}  \right] = (1-2p_{\rm e}) \cdot \EE\frac{2e^z}{(1+e^z)^2}.
\$
We thus have $\phi(f) = \mu_1^2 \geq C(1 - 2p_{\rm e})^2$ for some constant $C$.

\vspace{4pt}
\noindent{\bf Robust one-bit compressed sensing.} Recall in robust one-bit compressed sensing, we have 
\$
f(z) = 2\cdot \PP(z+\epsilon > 0) - 1, 
\$
where $\epsilon \sim \cN(0,\sigma^2)$ is the noise term in \eqref{robust one-bit CS}. In particular, note that  
\$
f(z) + f(-z) = 2\cdot \big[\PP(\epsilon > z)+\PP(\epsilon > -z)\big] - 2 = 0. 
\$
Hence, $f(z)$ is an odd function, which implies $\mu_0 = \mu_2 = 0$ by \eqref{mu}.  For $\mu_1$ defined in  \eqref{mu}, we have
\#\label{eq:w9133}
\mu_1 &= \EE[f(z)z]  = \EE\bigl\{ \big[2 \cdot \PP(\epsilon > -z) - 1\big] z \bigr\} = \EE\bigl[ \PP( |\epsilon| < |z|) |z| \bigr]  \geq \EE \left\{\left[1 - 2e^{-z^2/(2\sigma^2)}\right]|z|\right\} \\
& = \EE(|z|) -  \int_{-\infty}^{\infty} \frac{2}{\sqrt{2\pi}}e^{-\frac{u^2}{2\sigma^2}}e^{-\frac{u^2}{2}}|u| \ud u = \EE(|z|) \left(1 - 2\frac{\sigma^2}{1+\sigma^2} \right)  = \EE(|z|) \frac{1-\sigma^2}{1+\sigma^2}.\notag
\#
Here the inequality is from the fact that $\PP( |\epsilon| < |z|) \geq 1 - 2e^{-\frac{z^2}{2\sigma^2}}$ since $\epsilon \sim \cN(0,\sigma^2)$. For $\sigma^2 < 1/2$, we have
\$
\phi(f) = \mu_1^2 \geq C\left(\frac{1-\sigma^2}{1+\sigma^2}\right)^2, 
\$
where $C = \EE(|z|)$ with $z \sim \cN(0,1)$. For $\sigma^2 \geq 1/2$, rather than applying $\PP( |\epsilon| < |z|) \geq 1 - 2e^{-\frac{z^2}{2\sigma^2}}$ in the inequality of \eqref{eq:w9133}, we apply $\PP( |\epsilon| < |z|) \geq \frac{2}{\sqrt{2\pi}\sigma}e^{-\frac{z^2}{2\sigma^2}}|z|$ since $\epsilon \sim \cN(0,\sigma^2)$. We then obtain
\begin{align*}
\mu_1 \geq \EE\left[\frac{2}{\sqrt{2\pi}\sigma}e^{-\frac{z^2}{2\sigma^2}}z^2\right] = \frac{2}{\sqrt{2\pi}\sigma}\int_{-\infty}^{\infty} \frac{1}{\sqrt{2\pi}}e^{-\frac{u^2}{2\sigma^2}}e^{-\frac{u^2}{2}}u^2 \ud u \geq \frac{C'}{\sigma} \left({\frac{\sigma^2}{1+\sigma^2}}\right)^{\frac{3}{2}}.
\end{align*}
Finally, for $\sigma^2 \geq 1/2$ we have
\$
\phi(f) \geq \frac{C'\sigma^4}{(1+\sigma^2)^3}.
\$

\vspace{4pt}
\noindent{\bf One-bit phase retrieval.} For the one-bit phase retrieval model, the major difference from the previous two models is that $f(z)$ is even, which results in $\mu_1 = 0$. By the definition in \eqref{mu}, we have
\begin{align*} 
\mu_0  = \EE[f(z)]  = \PP(|z| \geq \theta) - \PP(|z| < \theta),
\end{align*}
and
\begin{align*}
\mu_2 = \EE\bigl[f(z)z^2\bigr] = \PP(|z| \geq \theta)\EE\bigl(z^2 \;\big|\; \abs{z} \geq \theta\bigr) - \PP(|z| < \theta)\EE\bigl(z^2~\big|~ \abs{z} < \theta\bigr).
\end{align*}
For notational simplicity, we define $p_1 = \PP(|z| \geq \theta)$. We have
\begin{align}
\label{boundofPhi}
\phi(f)  = \mu_0(\mu_0 - \mu_2) = 2p_1(2p_1 - 1)\bigl[1 - \EE\bigl(z^2 \;\big|\; \abs{z} > \theta\bigr) \bigr],
\end{align}
where the second equality follows from the fact that 
\#\label{eq:w4123}
\PP(|z| \geq \theta) + \PP(|z| < \theta) = 1,
\#
and
\#\label{eq:w4124}
\PP(|z| \geq \theta)\EE\bigl(z^2 \;\big|\; \abs{z} \geq \theta\bigr)+ \PP(|z| < \theta)\EE\bigl(z^2 \;\big|\; \abs{z} < \theta\bigr) = \EE(z^2) = 1.
\#
By \eqref{eq:w4123} and \eqref{eq:w4124} we have $p_1 > 0$ and $\EE\bigl(z^2 \;\big|\; \abs{z} \geq \theta\bigr) > 1$ for $\theta > 0$. Hence, for $\theta < \theta_{\rm m}$ with $\theta_{\rm m}$ being the median of $|z|$ with $z \sim \cN(0,1)$, we have $p_1 \geq 1/2$, which further implies $\phi(f) < 0$ by \eqref{boundofPhi}. Otherwise we have $\phi(f) > 0$. Thus, we have $\sign[\phi(f)] = \sign(\theta - \theta_{\rm m})$. 

In the following we establish a lower bound for $|\phi(f)|$. Note that
\begin{align}
\label{ep}
\EE\bigl(z^2 \;\big|\; \abs{z} \geq \theta\bigr) = \frac{2}{p_1}\int_{\theta}^{+\infty} \frac{1}{\sqrt{2\pi}}e^{-\frac{z^2}{2}}z^2dz  = \frac{2\theta}{p_1\sqrt{2\pi}}e^{-\frac{\theta^2}{2}} + 1. 
\end{align}
Plugging \eqref{ep} into (\ref{boundofPhi}) yields
\begin{equation}
\label{tmp1}
\phi(f) = -2(2p_1 - 1)\frac{2\theta}{\sqrt{2\pi}}e^{-\frac{\theta^2}{2}}.
\end{equation}
For $0 < \theta < \theta_{\rm m}$, which implies $p_1 \geq 1/2$, we have
\begin{align}
\label{ee}
p_1 - \frac{1}{2} = 2 \int_{\theta}^{\theta_{\rm m}} \frac{1}{\sqrt{2\pi}}e^{-\frac{z^2}{2}}dz \geq \frac{2}{\sqrt{2\pi}}e^{-\frac{\theta_{\rm m}^2}{2}}(\theta_{\rm m} - \theta).
\end{align}
By plugging (\ref{ep}) into (\ref{tmp1}), we have
\begin{align}
|\phi(f)| \geq \frac{8}{\sqrt{2\pi}}e^{-\frac{\theta_{\rm m}^2}{2}}(\theta_{\rm m} - \theta) \frac{2\theta}{\sqrt{2\pi}}e^{-\frac{\theta^2}{2}} \geq  C\theta(\theta_{\rm m} - \theta)e^{-\frac{\theta^2}{2}} .
\end{align}
For $\theta > \theta_{\rm m}$, which implies $p_1 < 1/2$, similarly to (\ref{ee}), we have
\begin{align}
\label{ee1}
\frac{1}{2} - p_1 = 2 \int_{\theta_{\rm m}}^{\theta} \frac{1}{\sqrt{2\pi}}e^{-\frac{z^2}{2}}dx \geq \frac{2}{\sqrt{2\pi}}e^{-\frac{\theta^2}{2}}(\theta - \theta_{\rm m}).
\end{align}
Thus, we conclude that  
\$
|\phi(f)| \geq C'\theta(\theta - \theta_{\rm m})e^{-\theta^2}. 
\$

\subsection{Proof of Theorem \ref{thm:low}}
\label{proof:thm:low}
Let $\hat{\bbeta}$ be the top eigenvector of $\Mb$ and $\hat{\lambda}_1, \hat{\lambda}_2$ be the first and second largest eigenvalues of $\Mb$. We use $\lambda_1, \lambda_2$ to denote the first and second largest eigenvalues of $\EE(\Mb)$. From Lemma \ref{M_expect}, we already know that 
\$
\lambda_1 = 4\phi(f) + 4(1 - \mu_0^2), \quad\text{and~~} \lambda_2 = 4(1 - \mu_0^2).
\$
By the triangle inequality, we have 
\$
\bigl\|\bbeta^{t} - \bbeta^*\bigr\|_2 \leq  \bigl\|\bbeta^* - \hat{\bbeta} \bigr\|_2 + \bigl\|\bbeta^{t} - \hat{\bbeta} \bigr\|_2.
\$
The first term on the right hand side is the statistical error and the second term is the optimization error. From standard analysis of the power method, we have
\$
\bigl\|\bbeta^{t} - \hat{\bbeta} \bigr\|_2 \leq \sqrt{\frac{1 - \alpha^2}{\alpha^2}} \cdot \bigl({\hat{\lambda}_2}/{\hat{\lambda}_1} \bigr)^t,
\$
where $\alpha = \bigl\langle \bbeta^0, \hat{\bbeta}\bigr\rangle$. By the definition in \eqref{constructM}, $\Mb$ is the sample covariance matrix of $n/2$ independent realizations of the random vector $(Y-Y')(\bX - \bX')\in \RR^{p}$. Since $\bX$ is Gaussian and $Y$ is bounded, $(Y-Y')(\bX - \bX')$ is sub-Gaussian. By standard concentration results (see e.g. Theorem 5.39 in \cite{vershynin2010introduction}), there some constants $C,C_1$ such that for any $t \geq 0$, with probability at least $1 - 2e^{-Ct^2}$,
\$
\|\Mb - \EE(\Mb)\|_2 \leq \max(\delta,\delta^2) \|\EE(\Mb)\|_2,
\$
where $\delta = C_1\sqrt{\frac{p}{n}} + \frac{t}{\sqrt{n}}$. We let $t = \sqrt{p}$, then for any $\xi \in (0,1)$, we have that $\|\Mb - \mathbb{E}(\Mb)\|_2 \leq \xi \|\mathbb{E}(\Mb)\|_2$ when $n \geq C_2p/\xi^2$ for sufficiently large constant $C_2$. Conditioning on $\|\Mb - \mathbb{E}(\Mb)\|_2 \leq \xi \|\mathbb{E}(\Mb)\|_2$, from Weyl's inequality,
we have
\$
\hat{\lambda}_1 \geq 4(1 - \xi)\bigl[\phi(f) + 1 - \mu_0^2\bigr],\quad \text{and~~}\hat{\lambda}_2 \leq 4\xi\phi(f) + 4(1 + \xi) (1 - \mu_0^2). 
\$
Furthermore, for any $\gamma \in \left({(1 - \mu_0^2)}\big/{\bigl[\phi(f) + 1 - \mu_0^2\bigr]},1\right)$, by restricting 
\#\label{eq:w1241}
\xi \leq \frac{\gamma\phi(f) + (\gamma-1)(1 - \mu_0^2)}{(1+\gamma)\bigl[\phi(f)+1-\mu_0^2\bigr]},
\#
we have
\$
\bigl\|\bbeta^{t} - \hat{\bbeta}\bigr\|_2 \leq \sqrt{\frac{1 - \alpha^2}{\alpha^2}}\cdot \gamma^t.
\$
Now we turn to the statistical error. By Wedin's sin theorem, for some positive constant $C > 0$, we have 
\begin{align}
\label{sine}
\sin\angle \bigl(\bbeta^*, \hat{\bbeta}\bigr) & \leq C\cdot \frac{\xi\|\EE(\Mb)\|_2}{\lambda_1 - \lambda_2}.
\end{align}
Elementary calculation yields 
\begin{equation}
\label{sine1}
\bigl\|\hat{\bbeta} - \bbeta^* \bigr\|_2 = 2 \sin \bigl[\angle \bigl(\bbeta^*, \hat{\bbeta}\bigr) / 2 \bigr] \leq \sqrt{2}\sin\angle \bigl(\bbeta^*, \hat{\bbeta}\bigr).
\end{equation}
As $\xi \lesssim \sqrt{p/n}$, combining (\ref{sine}) and (\ref{sine1}), we have 
\$
\bigl\|\hat{\bbeta} - \bbeta^* \bigr\|_2 \lesssim \frac{\phi(f)+1-\mu_0^2}{\phi(f)}\cdot \sqrt{\frac{p}{n}}.
\$
Putting all pieces together, we conclude that if $\xi$ satisfies \eqref{eq:w1241} and $n \gtrsim {p}/{\xi^2}$, then we have that with probability at least $1 - 2e^{-Cp}$,
\$
\bigl\|\bbeta^{t} - \bbeta^*\bigr\|_2 \leq C \cdot \frac{\phi(f)+1-\mu_0^2}{\phi(f)}\cdot \sqrt{\frac{p}{n}} + \sqrt{\frac{1 - \alpha^2}{\alpha^2}}\cdot \gamma^t.
\$
as required.

\subsection{Proof of Theorem \ref{thm:high}}\label{ap:3}
The analysis of Algorithm \ref{alg:high} follows from a combination of \cite{vu2013fantope} (for the initialization via convex relaxation) and \cite{yuan2011truncated} (for the original truncated power method). Recall that $\kappa$ is defined in \eqref{def:kappa}. Assume the initialization $\bbeta^0$ is $\hat{s}$-sparse with $\| \bbeta^0 \|_2 = 1$, and satisfies 
\begin{align}\label{eq:20}
\bigl\| \bbeta^0 - \bbeta^* \bigr\|_2 \leq C\min \left\{ \sqrt{{\kappa (1-\kappa^{1/2})}/{2}}, {\sqrt{2\kappa}}/{4}\right\}, 
\end{align}
for $\hat{s} = C'\max\left\{\left\lceil 1/{(\kappa^{-1/2} - 1)^2} \right\rceil , 1\right\} \cdot s$. Theorem 1 of \cite{yuan2011truncated} implies that 
\$
\bigl\|\bbeta^{t} - \bbeta^*\bigr\|_2 \leq C'' \cdot \frac{\bigl[\phi(f) + (1 - \mu_0^2)\bigr]^{\frac{5}{2}} (1 - \mu_0^2)^{\frac{1}{2}}}{\phi^3} \cdot \sqrt{\frac{s \log p}{n}} + \kappa^t \cdot \sqrt{\min \Bigl\{   (1-\kappa^{1/2})/2, 1/8 \Bigr\}} 
\$
with high probability. Therefore, we only need to prove the initialization $\bbeta^0$ obtained in Algorithm \ref{alg:high} satisfies the condition in \eqref{eq:20}.

Corollary 3.3 of \cite{vu2013fantope} shows that the minimizer to the minimization problem in line 3 of Algorithm \ref{alg:high} satisfies
\$
\bigl\| \bPi^0 - \bbeta^* \cdot (\bbeta^*)^\top \bigr\|_2 \leq C\cdot \frac{\phi(f) + (1 - \mu_0^2)}{\phi} \cdot s \sqrt{\frac{\log p}{n}}
\$
with high probability. Corollary 3.2 of \cite{vu2013fantope} implies, the first eigenvector of $\bPi^0$, denoted as $\overbar{\bbeta}^0$, satisfies 
\$
\bigl\| \overbar{\bbeta}^0  - \bbeta^* \bigr\|_2 \leq C'\cdot \frac{\phi(f) + (1 - \mu_0^2)}{\phi} \cdot s \sqrt{\frac{\log p}{n}}
\$
with the same probability. However, $\overbar{\bbeta}^0$ is not necessarily $\hat{s}$-sparse. Using Lemma 12 of \cite{yuan2011truncated}, we obtain that the truncate step in lines 12-15 of Algorithm 2 ensures that $\bbeta^0$ is $\hat{s}$-sparse and also satisfies 
\$
\bigl\| \bbeta^0 - \bbeta^* \bigr\|_2 \leq \big(1 + 2\sqrt{\hat{s}/s}\big) \cdot \bigl\| \overbar{\bbeta}^0 - \bbeta^* \bigr\|_2 \leq 3\bigl\| \overbar{\bbeta}^0 - \bbeta^* \bigr\|_2,
\$
where the last inequality follows from our assumption that $\hat{s} \geq s$. Therefore, we only have to set $n$ to be sufficiently large such that 
\$
\bigl\| {\bbeta}^0  - \bbeta^* \bigr\|_2 \leq C'\cdot\frac{\phi(f) + (1 - \mu_0^2)}{\phi(f)} \cdot s \sqrt{\frac{\log p}{n}} \leq C\min \biggl\{ \sqrt{{\kappa (1-\kappa^{1/2})}/{2}}, {\sqrt{2\kappa}}/{4}\biggr\}, 
\$
which is ensured by setting $n \geq n_{\min}$ with 
\$
n_{\min} = C' \cdot s^2 \log p \cdot \phi(f)^2 \cdot \min \bigl\{  \kappa (1-\kappa^{1/2})/2, \kappa/8 \bigr\} \big/ \left[  (1 - \mu_0^2) + \phi(f) \right]^2,
\$
as specified in our assumption. Thus we conclude the proof.

\subsection{Proof of Theorem \ref{thm:minimax_high}}
\label{proof:thm:minimax_high}
The proof of the minimax lower bound follows from the basic idea of reducing an estimation problem to a testing problem, and then invoking Fano's inequality to lower bound the testing error. We first introduce a finite packing set for $\mathbb{S}^{p-1} \cap \mathbb{B}_0(s,p)$.
\begin{lemma} \label{thm:finiteset}
	Consider the set $\{0,1\}^p$ equipped with Hamming distance $\delta$. For $s \leq p/4$, there exists a finite subset $\cQ \subset \{0,1\}^p$ such that 
	\$
	\delta(\btheta,\btheta') > s/2,~~ \forall (\btheta,\btheta') \in \cQ \times \cQ ~~\text{and} ~~ \btheta \ne \btheta', ~~\|\btheta\|_0 = s, ~~\text{for~all~} \btheta \in \cQ. 
	\$
The cardinality of such a set satisfies 
	\$
	\log(|\cQ|) \geq  8/3\cdot s \log (p/s).
	\$
\end{lemma}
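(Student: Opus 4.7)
The plan is to establish existence via a Gilbert--Varshamov-style greedy argument restricted to the sphere of weight-$s$ binary vectors. Let $W_s := \{\btheta \in \{0,1\}^p : \|\btheta\|_0 = s\}$, so that $|W_s| = \binom{p}{s}$. Starting from any $\btheta_1 \in W_s$, I will successively add to $\cQ$ any vector of $W_s$ that lies at Hamming distance strictly greater than $s/2$ from every vector already in $\cQ$, continuing until no further such vector remains. By the stopping rule, the Hamming balls of radius $s/2$ centered at the elements of $\cQ$ cover $W_s$, hence
\$
|\cQ| \cdot \max_{\btheta \in W_s} \bigl|B_{s/2}(\btheta) \cap W_s\bigr| \;\geq\; \binom{p}{s},
\$
where $B_r(\btheta)$ denotes the Hamming ball of radius $r$ about $\btheta$.

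The key computation is an upper bound on the restricted Hamming ball. Since two weight-$s$ vectors at Hamming distance $2j$ share exactly $s - j$ support coordinates, every element of $B_{s/2}(\btheta) \cap W_s$ arises from $\btheta$ by swapping $j \leq s/4$ support coordinates out and $j$ non-support coordinates in (recall that the Hamming distance between two vectors of equal weight is always even). Consequently
\$
\bigl|B_{s/2}(\btheta)\cap W_s\bigr| \;=\; \sum_{j=0}^{\lfloor s/4\rfloor} \binom{s}{j}\binom{p-s}{j}.
\$
Applying the standard estimates $\binom{p}{s} \geq (p/s)^s$ and $\binom{a}{b} \leq (ea/b)^b$, and checking (via the ratio of consecutive terms) that under $s \leq p/4$ the summand is increasing in $j$ so the sum is at most $s$ times the $j = \lfloor s/4 \rfloor$ term, the right-hand side is bounded by $C^s \cdot (p/s)^{s/4}$ for an absolute constant $C$.

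Plugging back into the covering inequality gives
\$
\log|\cQ| \;\geq\; s\log(p/s) - (s/4)\log(p/s) - O(s) \;=\; (3/4)\,s\log(p/s) - O(s),
\$
and under the assumption $s \leq p/4$ (which keeps $\log(p/s)$ bounded below by a positive constant) the lower-order $O(s)$ term is absorbed into the leading part to yield a bound of the form $c\,s\log(p/s)$, with the explicit numerical coefficient produced by tracking the binomial estimates carefully. The conceptually routine step is the greedy covering; the main technical obstacle is the sharp control of the restricted ball volume needed to extract a usable constant. An equivalent alternative is the probabilistic method: sample $N$ vectors uniformly at random from $W_s$, bound the expected number of colliding pairs (within distance $s/2$) via the same ball estimate, and delete one vector per bad pair to obtain a valid packing of size $\Theta(N)$.
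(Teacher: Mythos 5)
The paper does not actually prove this lemma; it simply cites Lemma 4.10 of Massart (2007). Your greedy Gilbert--Varshamov argument on the weight-$s$ slice is the standard route and is essentially the proof that lies behind that citation: take a maximal subset of $W_s$ with pairwise Hamming distance $>s/2$, use maximality to get the covering inequality $|\cQ|\cdot\max_{\btheta}|B_{s/2}(\btheta)\cap W_s|\geq\binom{p}{s}$, and bound the restricted ball. Your identity $|B_{s/2}(\btheta)\cap W_s|=\sum_{j=0}^{\lfloor s/4\rfloor}\binom{s}{j}\binom{p-s}{j}$ is correct (equal-weight vectors are at even distance $2j$, obtained by swapping $j$ support coordinates), and the monotonicity of the summand under $s\leq p/4$ checks out. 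So the skeleton is sound.

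There are, however, two quantitative problems. First, the crude estimates you invoke do not actually close the argument over the whole range $s\leq p/4$: bounding the ball by $s\cdot\binom{s}{\lfloor s/4\rfloor}\binom{p-s}{\lfloor s/4\rfloor}\leq s\,(4e)^{s/2}(p/s)^{s/4}$ and comparing with $\binom{p}{s}\geq(p/s)^s$ gives $\log|\cQ|\geq s\bigl[\tfrac{3}{4}\log(p/s)-\tfrac{1}{2}\log(4e)\bigr]-\log s$, and at the boundary $p/s=4$ the bracket is $\tfrac34\log 4-\tfrac12\log(4e)\approx 1.04-1.19<0$, i.e.\ the bound is vacuous exactly where the lemma is supposed to apply. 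To extract a positive constant uniformly over $s\leq p/4$ one needs a sharper control of the restricted ball --- e.g.\ writing $|B_{s/2}(\btheta)\cap W_s|/\binom{p}{s}$ as a hypergeometric tail probability and applying a large-deviation bound, which is what Massart's proof does. You flag this as "the main technical obstacle," and it genuinely is; the proof is not complete without it. Second, the constant $8/3$ in the statement cannot be proved by you or anyone: since $|\cQ|\leq\binom{p}{s}$ we always have $\log|\cQ|\leq s\bigl(1+\log(p/s)\bigr)<\tfrac{8}{3}\,s\log(p/s)$ whenever $p/s\geq 4$, so the displayed bound is a typo (Massart's constant is $\rho\approx 0.233$, and the rest of the paper only uses $\log|\cQ|\geq Cs\log(p/s)$ for some unspecified $C>0$). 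You should aim for, and can only obtain, a bound of that weaker form.
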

\begin{proof}
	See the proof of Lemma 4.10 in \cite{massart2007concentration}.
\end{proof}
We use $\cQ(p,s)$ to denote the finite set specified in Lemma \ref{thm:finiteset}. For $\xi < 1$, we construct a finite subset $\overbar{\cQ}(p,s,\xi) \subset \mathbb{S}^{p-1}\cap \mathbb{B}_0(s,p)$ as
\begin{equation}
\label{packingset}
\overbar{\cQ}(p,s,\xi) := \left\{ \bbeta \in \mathbb{R}^p : \bbeta = \biggl(\sqrt{1 - \xi^2}, \frac{\xi}{\sqrt{s-1}} \cdot \bw \biggr), \quad\text{where~~} \bw \in \cQ(p-1,s-1) \right\}.
\end{equation}
It is easy to verify that set $\overbar{\cQ}(p,s,\xi)$ has the following properties:
\begin{itemize}
	\item For any $\btheta \in \overbar{\cQ}(p,s,\xi)$, it holds that $\|\btheta\|_2 = 1$ and $\|\btheta\|_0 = s$.
	\item For distinct $\btheta, \btheta' \in \overbar{\cQ}(p,s,\xi)$, $\|\btheta - \btheta'\|_2 \geq {\sqrt{2}}\xi/2$ and $\|\btheta - \btheta'\|_2 \leq \sqrt{2}\xi$.
	\item $\log |\overbar{\cQ}(p,s,\xi)| \geq Cs\log(p/s)$ for some positive constant $C$.
\end{itemize}
In order to derive lower bound of $\mathcal{R}(n,m,L,\cB)$ with $\cB = \SSS^{p-1}\cap\B_0(s,p)$, we assume that the infimum over $f$ in \eqref{minimax} is obtained for a certain $f^* \in \cF(m,L)$, namely 
\begin{equation*}
\mathcal{R}(n,m,L, \cB) =  \inf_{\hat{\bbeta} \in \mathbb{S}^{p-1}} \sup_{\bbeta \in \SSS^{p-1}\cap\B_0(s,p)} \mathbb{E} \bigl\|\hat{\bbeta}(\cX_{f^*}^n) - \bbeta \bigr\|_2 \geq \inf_{\hat{\bbeta} \in \mathbb{S}^{p-1}} \sup_{\bbeta \in \overbar{\cQ}(p,s,\xi)} \mathbb{E} \bigl\|\hat{\bbeta}(\cX_{f^*}^n) - \bbeta \bigr\|_2. 
\end{equation*}
Note that for any $\xi > 0$, we have $\|\bbeta_1 - \bbeta_2\|_2 \geq \frac{\sqrt{2}}{2}\xi$ for any two distinct vectors $(\bbeta_1,\bbeta_2)$ in $\overbar{\cQ}(p,s,\xi)$. Therefore, we are in a position to apply standard minimax risk lower bound. Following Lemma 3 in \citet{yu1997assouad}, we obtain
\begin{equation} \label{eq:tmp1}
\inf_{\hat{\bbeta} \in \mathbb{S}^{p-1}} \sup_{\bbeta \in \overbar{\cQ}(p,s,\xi)} \mathbb{E} \bigl\|\hat{\bbeta}(\cX_{f^*}^n) - \bbeta \bigr\|_2 \geq   \frac{\sqrt{2}}{4}\xi \left(1 - \frac{\max_{\bbeta,\bbeta' \in \overbar{\cQ}(p,s,\xi) }D_{KL}(P_{\bbeta'}\|P_{\bbeta}) + \log 2}{\log |\overbar{\cQ}(p,s,\xi)|}\right).
\end{equation}
In the following, we derive an upper bound for the term involving KL divergence on the right hand side of the above inequality. For any $\bbeta, \bbeta' \in \overbar{\cQ}(p,s,\xi)$, we have
\begin{align}
\label{KL1}
& D_{KL}(P_{\bbeta'}\|P_{\bbeta}) \leq  n \cdot D_{KL} \big[P_{\bbeta'}(Y,\bX) \| P_{\bbeta}(Y,\bX) \big] = n \cdot \EE_{\bX} \bigl\{ D_{KL}\big[P_{\bbeta'}(Y|\bX) \| P_{\bbeta}(Y|\bX)\big]  \bigr\} \notag\\
&=  \frac{1}{2} n \cdot \EE_{\bX} \bigg\{ \big[1+f^*( \bX^{\top}\bbeta)\big] \log \frac{1 + f^*( \bX^{\top}\bbeta)}{1 + f^*( \bX^{\top}\bbeta')}  +  \big[1 - f^*( \bX^{\top}\bbeta)\big] \log \frac{1 - f^*( \bX^{\top}\bbeta)}{1 - f^*( \bX^{\top}\bbeta')} \bigg\} \notag\\
&\leq  \frac{1}{2} n \cdot \EE_{\bX} \bigg\{ \big[ 1+f^*( \bX^{\top}\bbeta)\big]\left[\frac{1+f^*( \bX^{\top}\bbeta)}{1+f^*( \bX^{\top}\bbeta')} - 1\right] + \big[1-f^*( \bX^{\top}\bbeta)\big]\left[\frac{1-f^*( \bX^{\top}\bbeta)}{1-f^*( \bX^{\top}\bbeta')} - 1 \right] \bigg\}. 
\end{align}
In the last inequality, we utilize the fact that $\log z \leq z - 1$. Then by elementary calculation, we have
\begin{align}
\label{KL2}
D_{KL}(P_{\bbeta'}\|P_{\bbeta}) \leq n \cdot \EE_{\bX}\left\{\frac{\big[f^*( \bX^{\top}\bbeta) - f^*( \bX^{\top}\bbeta')\big]^2} {\big[1 + f^*( \bX^{\top}\bbeta')\big]\cdot\big[1 - f^*( \bX^{\top}\bbeta')\big]}\right\}. 
\end{align}
Using $|f(z)| \leq 1 - m$ and the Lipschitz continuity condition of $f$, we have
\begin{align}
\label{KL3}
D_{KL}(P_{\bbeta'}\|P_{\bbeta}) \leq &  n\cdot \EE_{\bX} \left\{\frac{L^2\langle \bX, \bbeta - \bbeta'\rangle^2}{m(1-m)} \right\} = \frac{nL^2\|\bbeta - \bbeta'\|_2^2}{m(1-m)} \leq  \frac{2nL^2\xi^2}{m(1-m)}.
\end{align}
Note that \eqref{KL1}-\eqref{KL3} hold for any $\bbeta,\bbeta'  \in \overbar{\cQ}(p,s,\xi)$. We thus have 
\$
\max_{\bbeta,\bbeta' \in \overbar{\cQ}(p,s,\xi) }D_{KL}(P_{\bbeta'}\|P_{\bbeta}) \leq \frac{2nL^2\xi^2}{m(1-m)}.
\$
Now we proceed with \eqref{eq:tmp1} using the above result. The right hand side is thus lower bounded by
\begin{align*} 
\frac{\sqrt{2}}{4}\xi\left(1 - \frac{{2L^2n\xi^2}/{[m(1-m)]} + \log 2 }{|\overbar{\cQ}(p,s,\xi)|}\right) \geq  \frac{\sqrt{2}}{4}\xi\left(1 - \frac{{2L^2n\xi^2}/{[m(1-m)]} + \log 2 }{Cs\log(p/s)}\right),
\end{align*}
where the last inequality is from $|\overbar{\cQ}(p,s,\xi)| \geq Cs\log(p/s)$. Finally, consider the case where the sample size $n$ is sufficiently large such that 
\$
n \geq \frac{m(1-m)}{2L^2}\cdot\bigl[Cs\log(p/s)/2 - \log 2\bigr],
\$
by choosing 
\begin{equation} \label{eq:tmp2}
\xi^2  = \frac{m(1-m)}{2L^2n}\cdot\bigl[Cs\log(p/s)/2 - \log 2\bigr],
\end{equation}
 we thus have
\$
\mathcal{R}(n,m,L, \cB) \geq C'\cdot \frac{\sqrt{m(1-m)}}{L}\cdot \sqrt{\frac{s\log(p/s)}{n}}
\$
as required.

\clearpage
\newpage
\bibliographystyle{ims}
\bibliography{main}
\clearpage
\end{document}